\documentclass{article}

\usepackage{microtype}
\usepackage{graphicx}
\usepackage{subcaption}
\usepackage{booktabs} 

\usepackage[nottoc]{tocbibind}
\usepackage{minitoc}
\usepackage{bbm}

\usepackage{tabularx}
\usepackage[table]{xcolor}
\newcolumntype{Y}{>{\centering\arraybackslash}X}
\usepackage{makecell}
\usepackage{multirow}

\definecolor{LOrange}{HTML}{E69F00}
\definecolor{LBlue}{HTML}{56B4E9}
\definecolor{LGreen}{HTML}{009E73}

\usepackage[most]{tcolorbox}
\tcbset{  
  aibox/.style={
    width=\textwidth,
    top=10pt,
    colback=LBlue!6!white,
    colframe=black,
    colbacktitle=black,
    enhanced,
    center,
    attach boxed title to top left={yshift=-0.1in,xshift=0.15in},
    boxed title style={boxrule=0pt,colframe=white,},
  }
}
\newtcolorbox{AIbox}[2][]{aibox,title=#2,#1}

\usepackage{hyperref}

\usepackage[preprint]{icml2026}

\usepackage{amsmath}
\usepackage{amssymb}
\usepackage{mathtools}
\usepackage{amsthm}

\usepackage[capitalize,noabbrev]{cleveref}

\theoremstyle{plain}
\newtheorem{theorem}{Theorem}

\theoremstyle{definition}

\theoremstyle{remark}

\usepackage[textsize=tiny]{todonotes}

\icmltitlerunning{On Calibration of Large Language Models: From Response To Capability}

\begin{document}

\twocolumn[
  \icmltitle{On Calibration of Large Language Models: From Response To Capability}




  \icmlsetsymbol{equal}{*}

  \begin{icmlauthorlist}
    \icmlauthor{Sin-Han Yang}{equal,comp}
    \icmlauthor{Cheng-Kuang Wu}{equal,comp}
    \icmlauthor{Chieh-Yen Lin}{comp}
    \icmlauthor{Yun-Nung Chen}{sch}
    \icmlauthor{Hung-yi Lee}{sch}
    \icmlauthor{Shao-Hua Sun}{comp,sch}
  \end{icmlauthorlist}

  \icmlaffiliation{comp}{Appier AI Research}
  \icmlaffiliation{sch}{National Taiwan University}

  \icmlcorrespondingauthor{Sin-Han Yang}{sinhan.yang@appier.com}
  \icmlcorrespondingauthor{Cheng-Kuang Wu}{brian.wu@appier.com}

  \icmlkeywords{Machine Learning, ICML}

  \vskip 0.3in
]



\printAffiliationsAndNotice{\icmlEqualContribution}

\begin{abstract}
Large language models (LLMs) are widely deployed as general-purpose problem solvers, making accurate confidence estimation critical for reliable use. Prior work on LLM calibration largely focuses on response-level confidence, which estimates the correctness of a single generated output. However, this formulation is misaligned with many practical settings where the central question is how likely a model is to solve a query overall. We show that this mismatch results from the stochastic nature of modern LLM decoding, under which single-response correctness fails to reflect underlying model capability. To address this issue, we introduce capability calibration, which targets the model's expected accuracy on a query. We formally distinguish capability calibration from response calibration and show that the two differ both theoretically and empirically. We establish an empirical evaluation setup and study a range of confidence estimation methods. Our results demonstrate that capability-calibrated confidence improves pass@$k$ prediction and inference budget allocation, establishing a foundation with potential for diverse applications.
Source code: \url{https://github.com/appier-research/llm-calibration}.
\end{abstract}

\section{Introduction}
\begin{figure}[t!]
  \centering
  \includegraphics[width=\columnwidth]{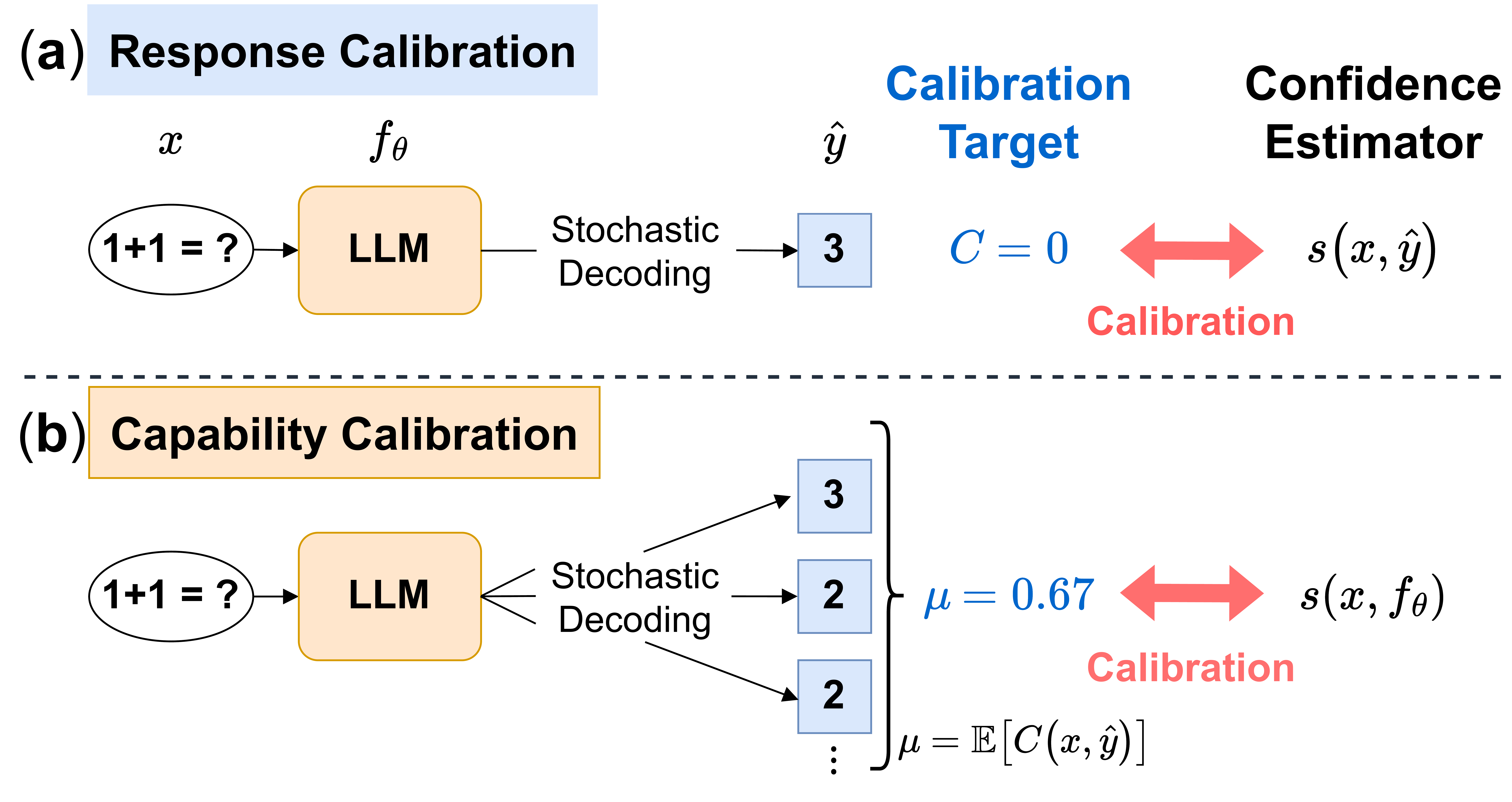}
    \caption{
      \textbf{Definitions of \textbf{(a)} response calibration and our proposed \textbf{(b)} capability calibration.} Given an input $x$, a model $f_{\theta}$, and its \textit{single} sampled output $\hat{y}$, response calibration calibrates the confidence $s(x,\hat{y})$ against the correctness $\mathcal{C}$ of $\hat{y}$. By contrast, capability calibration calibrates the confidence $s(x,f_{\theta})$ against the expected accuracy $\mu$ of the $f_{\theta}$'s output distribution. 
    }
    \label{fig: fig1}
\end{figure}

Large language models (LLMs) have fundamentally reshaped human-AI interaction by enabling users to pose queries in natural language and receive informative responses~\cite{ouyang2022training}.
This intuitive interface has driven their rapid adoption across a wide range of applications.
However, despite their apparent fluency, LLMs can produce incorrect or misleading outputs without explicitly signaling uncertainty.
This limitation makes accurate confidence estimation a critical component of reliable LLM deployment. 
Well-calibrated confidence scores can enable users to better judge when to trust model outputs~\citep{huang2024trustllm,aljohani2025comprehensive}, allow systems to selectively refuse or defer to human experts~\citep{wu2024need}, and support performance prediction for downstream tasks.

Given the important role of confidence estimation, a natural question is how to accurately evaluate its quality.
Calibration \citep{niculescu2005predicting,guo2017calibration} provides a principled evaluation framework by assessing how well estimated confidence aligns with correctness probability.
Most existing work on LLM calibration \citep{geng2024survey} adopts a response-level formulation: given a query $x$ and a generated response $\hat{y}$, a confidence estimator produces a score $s$ intended to reflect the probability that $\hat{y}$ is correct with respect to $x$.
Under this formulation, calibration is evaluated independently for each generated response.
We refer to this setting as response calibration (\cref{fig: fig1}a).

\begin{table*}[ht!]
    \centering 
    \caption{\textbf{Comparison of calibration definitions.} Unlike existing response calibration that assesses whether the confidence estimate $s$ aligns with the correctness of one decoded answer $\hat{y}$, our proposed capability calibration evaluates whether $s$ aligns with the model $f_{\theta}$'s capability to answer query $x$.}
    \label{tab:diff_comparison}
    \begin{tabularx}{\textwidth}{@{} l >{\hsize=0.8\hsize}X >{\hsize=0.8\hsize}X >{\hsize=1.4\hsize}X @{}}
        \toprule
        \textbf{Definition} & 
        \textbf{Calibration Target} & 
        \textbf{Interpretation} & 
        \textbf{Dependence on LM $f_{\theta}$} \\
        \midrule
        
        \textbf{Response Calibration} & 
        Accuracy of $\hat{y}$ given $x$ & 
        How likely is $\hat{y}$ correct? & 
        \textbf{No.} Since $\hat{y}$ is already decoded, the estimation of $s(x, \hat{y})$ is decoupled from the generating model $f_{\theta}$. \\
          \addlinespace[7pt] 
        
        \textbf{Capability Calibration} & 
        Expected accuracy of $f_{\theta}$ given $x$ & 
        How confident is $f_{\theta}$ in answering $x$? & 
        \textbf{Yes.} The expected accuracy of $x$ is directly dependent on $f_{\theta}$'s capability. \\
        
        \bottomrule
    \end{tabularx}
\end{table*}

In many practical settings, however, what matters is not whether a particular response $\hat{y}$ is correct, but how likely the LLM is to solve a given query overall.
This question naturally arises in applications such as allocating computational resources across queries \citep{chen2023frugalgpt,ong2024routellm} or predicting model performance in downstream pipelines.
We refer to this quantity, \textit{``how likely can the LLM answer this query correctly?''}, as query-level confidence.
Although response-level confidence is often used as a proxy for this quantity~\citep{xiong2023can,maurya2025selectllm}, the two are fundamentally misaligned in LLMs due to the stochastic nature of text generation.
Modern LLMs typically achieve better performance with stochastic decoding~\citep{holtzman2019curious,shi2024thorough}, such as non-zero temperature sampling \citep{renze2024effect,li2025exploring}, which can produce different responses given the same query across inference calls.
As a result, the correctness of any single sampled response cannot accurately reflect the LLM's underlying capability on that query.
This mismatch between single-response correctness and query-level performance lies beyond what response calibration can capture.

Motivated by these observations, we introduce capability calibration (\cref{fig: fig1}b), a calibration framework whose target is the model’s expected accuracy on a query $x$---that is, the probability that a response sampled from the model’s output distribution conditioned on $x$ is correct.
This formulation shifts the focus from whether a particular sampled response happens to be correct to how capable the model is of solving the query in expectation.
We formally distinguish capability calibration from response calibration and show that the two notions differ both theoretically (\S\ref{sec:rc-cc-diff}) and empirically (\S\ref{sec: exp evaluation}).
Notably, capability calibration is not merely the expectation of response calibration; the two quantities differ precisely by the variance of response correctness under the model’s output distribution.
We summarize the key differences between the two definitions in~\cref{tab:diff_comparison}.

Having established that capability calibration is distinct from response calibration, we next consider how to evaluate and achieve it in practice.
On the evaluation side, the theoretical target of expected accuracy under the model’s output distribution is not directly observable.
Hence, we develop an empirical evaluation framework that approximates capability calibration through repeated sampling.
On the method side, we experiment with a wide range of confidence estimation techniques for producing calibrated scores, spanning both training-free and training-based techniques.
Our results indicate that training linear probes on LLM activations offers a favorable tradeoff between computational cost and confidence estimation performance (§\ref{sec:results}).

Finally, we demonstrate that capability calibration enables practical applications (\S\ref{sec:applications}).
We apply capability-calibrated confidence scores to two representative tasks:
(1) pass@$k$ prediction \citep{schaeffer2025large,kazdan2025efficient}, where confidence estimates are used to predict the pass@$k$ success rate of individual queries without extensive sampling, and
(2) inference budget allocation \citep{snell2024scaling,damani2024learning}, where confidence estimates guide the allocation of computational resources across queries, with higher confidence requiring fewer resources.
In both settings, capability-calibrated confidence leads to improved performance over baselines.
Beyond these applications, we discuss additional scenarios where capability calibration can potentially provide tangible benefits.
By formally defining capability calibration, establishing its evaluation framework, and demonstrating its practical utility, our work offers a new perspective on LLM calibration that directly captures model capability at the query level.

\section{Related Works}

\subsection{LLM confidence estimation and calibration}
Confidence estimation focuses on estimating the probability that predictions are correct. In machine learning, previous works \citep{niculescu2005predicting,guo2017calibration}  define calibration as the agreement between confidence and the correctness of an output. We call this definition  \textbf{response calibration}. 
Denote $x$ as the input, $\hat{y}$ as the model's output, estimated confidence as $s(x,\hat{y})$, and
$\mathcal{C}(x,\hat{y}) \in \{0,1\}$ as the correctness function. Formally, 
\begin{equation}
\mathcal{C}(x,\hat{y})
= \mathbf{1}\!\left[ \hat{y} \text{ is correct for } x \right].
\end{equation}

Perfect response calibration is defined as:
\begin{equation}
    \mathbb{P} \left[ \mathcal{C}(x,\hat{y}) = 1\mid s(x,\hat{y}) = p \right] = p.
\end{equation}
Common evaluation metrics include Expected Calibration Error (ECE) \citep{naeini2015obtaining} and Brier score \citep{glenn1950verification}. 
Let $\mathcal{D}=\{x_i\}_{i=1}^N$ be a dataset of inputs, $\mathbf{s} = [s_1,s_2,...s_i,...s_N]$ be the estimated confidence for each instance, and $\hat{y}_i$ be the sampled response for input $x_i$. The Brier score used in response calibration is
\begin{equation} \label{equ: classical brier score}
    \mathcal{L}_{\mathrm{Brier}}^{\mathrm{response}}(\mathbf{s})\triangleq
\frac{1}{N}\sum_{i=1}^N \left(s_i - \mathcal{C}(x_i,\hat{y}_i) \right)^2.
\end{equation}

LLM confidence estimation methods are broadly categorized into training-free and training-based approaches. Training-free methods include verbalized confidence \citep{lin2022teaching,tian2023just}, and token probability methods \citep{kadavath2022language, manakul2023selfcheckgpt}.
Training-based methods include probing LLMs' hidden states \citep{zhang2025reasoning}, reinforcement learning \citep{damani2025beyond,wu2025mitigating} and others \citep{li2025conftuner}.
These methods typically operate post-hoc, estimating confidence only after the output is generated.
In contrast, a body of work on ``assessors'' focuses on anticipating the performance of a single response, aiming to estimate correctness before the response is generated \citep{zhou2022reject, cencerrado2025no,schellaert2025analysing}. Detailed descriptions of these methods are in Appendix \ref{sec:detailed_related_works}.

Despite these methodological differences, the prediction target across these methods remains the same: they aim to estimate the correctness of a single response. However,  LLMs are stochastic generative models. While recent work \citep{zhang2025beyondsingular} aggregates statistics over multiple samples to ensure a more robust measure of performance, it does not formalize a calibration target for these stochastic outcomes. We fill this gap by defining capability calibration, establishing the model's query-level expected accuracy as the precise target for confidence estimation.

\subsection{LLM uncertainty quantification}

LLM Uncertainty Quantification (UQ) is a field of methods that quantify the degree of uncertainty of the model towards specific inputs. While calibration measures the alignment between confidence scores and output correctness, UQ is often evaluated by uncertainty estimation's utility in downstream decisions \citep{huang2024survey}, such as discriminating between correct and incorrect predictions. Consequently, common evaluation metrics include the Area Under the Receiver Operating Characteristic curve (AUROC) \citep{hendrycks2016baseline} and the Risk-Coverage curve \citep{geifman2017selective}. Existing LLM UQ methods include: token-based approaches \citep{kadavath2022language, duan2024shifting}, sampling-based approaches \citep{wang2022self, kuhn2023semantic, cecere2025monte}, and methods leveraging the models' internal signals \citep{cohen2024don, chen2025query}. Detailed descriptions of these methods are in Appendix \ref{sec:detailed_related_works}.



Capability calibration is linked to LLM UQ, as it utilizes expected accuracy as the target for the estimated model's uncertainty regarding a specific input.  This makes capability-calibrated confidence estimations a natural fit for LLM UQ applications, such as selective prediction \citep{kamath2020selective}, hallucination detection \citep{kang2025uncertainty}, and model routing \citep{chen2023frugalgpt}.

\section{Capability Calibration} \label{sec: section 3}

Large language model's output is mostly non-deterministic. In this paper, we consider the expected accuracy of the LLM's output distribution, and propose a new definition of calibration called \textbf{capability calibration}. Capability calibration evaluates whether the estimated confidence agrees with the model's likelihood to answer an input correctly.

\subsection{Definition} \label{sec: cc definition}
For a given input $x$, we define the model’s \textbf{expected accuracy} as

\begin{equation} \label{equ:new confidence definition}
    \mu(x, f_{\theta})
\triangleq
\mathbb{P}_{\hat{y} \sim f_\theta(\cdot \mid x)}[\mathcal{C}(x,\hat{y})=1]
=
\mathbb{E}_{\hat{y} \sim f_\theta(\cdot \mid x)}[\mathcal{C}(x,\hat{y})],
\end{equation}

which is equivalent to
\begin{equation}
\mu(x, f_{\theta})
=
\lim_{N\to\infty}
\frac{1}{N}\sum_{i=1}^N \mathcal{C}(x,\hat{y}_i),
\quad \hat{y}_i \sim f_\theta(\cdot \mid x).    
\end{equation}

\cref{equ:new confidence definition} defines the target of the capability calibration. As illustrated in \cref{fig: fig1}, the target expected accuracy $\mu(x, f_{\theta})$ is defined as the frequency of correct outputs when the language model is sampled infinitely many times on the same input $x$.

An estimated confidence $s$ is well capability-calibrated if it is aligned with the expected accuracy $\mu(x, f_{\theta})$. The perfect calibration for capability calibration is
\begin{equation} \label{equ:new perfect confidence}
    s^* =  \mu(x, f_{\theta}).
\end{equation}

Capability calibration focuses on calibrating a single input. Therefore, we primarily discuss the Brier score \citep{glenn1950verification}, one of the most common metrics used to evaluate instance-level calibration. Specifically, let $\mu_i$ be the expected accuracy $\mu(x_i,f_\theta)$, the capability calibration Brier score is 
\begin{equation} \label{equ: new brier score}
    \mathcal{L}_{\mathrm{Brier}}^{\mathrm{capability}}(\mathbf{s})
  \triangleq  \frac{1}{N}\sum_{i=1}^N (s_i - \mu_i)^2.
\end{equation}

Since LLM outputs are not deterministic \citep{renze2024effect,li2025exploring, he2025nondeterminism}, different token generation paths might result in different answers.
For each input, a single sampled output is insufficient to represent the model’s capability.
Capability calibration better captures a model's capability since it cares about the agreement between confidence and accuracy of all sampled responses, while response calibration cares about the agreement between confidence and accuracy of one sampled response. Next, we discuss the difference between response calibration and capability calibration. 

\subsection{Difference between response calibration and capability calibration}
\label{sec:rc-cc-diff}
We argue that these two evaluations diverge in three key aspects: 
\begin{itemize}
    \item \textbf{Evaluation targets}: Response calibration targets the specific correctness of a generated sample $\mathcal{C}(x,\hat{y})$, whereas capability calibration targets the model's expected accuracy over all possible samples $\mathbb{E}_{\hat{y} \sim f_\theta(\cdot \mid x)}[\mathcal{C}(x,\hat{y})]$.  
    \item  \textbf{Conditional sets}: Response calibration is conditioned on both input and output $(x, \hat{y})$, estimating $\mathbb{P}(\text{Correct} \mid x, \hat{y})$. Capability calibration is conditioned on the input and model $(x, f_{\theta})$, estimating $\mathbb{P}(\text{Correct} \mid x, f_{\theta})$.
    \item \textbf{Optimal confidence}: Consequently, their optimal confidence scores differ as shown in \cref{the: optimal confidence differs}. Optimal response-calibrated confidence is a binary indicator of a response's correctness, while optimal capability-calibrated confidence is a continuous probability representing capability.
\end{itemize}

\begin{theorem} \label{the: optimal confidence differs}
\textbf{(Divergence of targets and optima).}
Let $x$ be an input and $\hat{y} \sim f_\theta(\cdot \mid x)$ be a generated response.
Minimizing the Brier scores for response calibration (\cref{equ: classical brier score}) and capability calibration (\cref{equ: new brier score}) yields distinct optimal confidence estimators:
\begin{equation}
\begin{split}
        & s^*_{\mathrm{resp}}(x, \hat{y}) = \mathcal{C}(x, \hat{y}) \in \{0, 1\}, \\
& s^*_{\mathrm{cap}}(x, f_{\theta}) = \mathbb{E}_{\hat{y} \sim f_\theta(\cdot \mid x)}[\mathcal{C}(x,\hat{y})] \in [0, 1].
\end{split}
\end{equation}
Unless the model is deterministic, or its predictions are always correct or always incorrect, the evaluation targets differ, i.e., $\mathcal{C}(x, \hat{y}) \neq \mathbb{E}_{\hat{y} \sim f_\theta(\cdot \mid x)}[\mathcal{C}(x,\hat{y})]$, implying distinct optimal confidence values.
\end{theorem}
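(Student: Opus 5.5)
The plan is to treat each Brier score as a sum of independent squared-error terms, one per input, and minimize each term pointwise. In \cref{equ: classical brier score}, the $i$-th summand $(s_i - \mathcal{C}(x_i,\hat{y}_i))^2$ depends only on $s_i$. Since $s_i = s(x_i,\hat{y}_i)$ may depend on the realized response, the target $\mathcal{C}(x_i,\hat{y}_i)\in\{0,1\}$ is a fixed number given the arguments of $s$. The summand is a convex quadratic in $s_i$, minimized uniquely at $s_i = \mathcal{C}(x_i,\hat{y}_i)$, where it attains the value $0$. This gives $s^*_{\mathrm{resp}}(x,\hat{y}) = \mathcal{C}(x,\hat{y})$. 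Likewise, each summand $(s_i-\mu_i)^2$ of \cref{equ: new brier score} is minimized uniquely at $s_i=\mu_i = \mathbb{E}_{\hat{y}\sim f_\theta(\cdot\mid x_i)}[\mathcal{C}(x_i,\hat{y})]$, giving $s^*_{\mathrm{cap}}$ as stated. The range claims are immediate: $\mathcal{C}$ is an indicator, and an expectation of an indicator lies in $[0,1]$.

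As a supporting remark, I would also record the bias--variance identity. If the capability estimator $s(x,f_\theta)$ is not allowed to see $\hat{y}$, then
\begin{equation*}
\mathbb{E}_{\hat{y}}\bigl[(s-\mathcal{C}(x,\hat{y}))^2\bigr] = (s-\mu)^2 + \mu(1-\mu).
\end{equation*}
This shows that the expected response Brier score for such an estimator equals the capability Brier score plus the Bernoulli variance $\mu(1-\mu)$. It matches the paper's claim that the two notions differ by the variance term, and it confirms that $\mu$ is also the optimum of the expected response loss restricted to $\hat{y}$-independent scores.

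The divergence claim is then a case analysis on $\mu=\mu(x,f_\theta)$. Since $\mathcal{C}(x,\hat{y})\in\{0,1\}$, the equality $\mathcal{C}(x,\hat{y})=\mu$ can only hold if $\mu\in\{0,1\}$.
\begin{itemize}
\item If $\mu=1$, then $\mathcal{C}=1$ almost surely, so the model is always correct.
\item If $\mu=0$, then $\mathcal{C}=0$ almost surely, so the model is always incorrect.
\item A deterministic model puts all mass on a single $\hat{y}$, which forces $\mu=\mathcal{C}(x,\hat{y})\in\{0,1\}$, so it is a subcase of the previous two.
\end{itemize}
Conversely, if $0<\mu<1$, every realized $\hat{y}$ has $\mathcal{C}(x,\hat{y})\in\{0,1\}\neq\mu$, so the two optimal confidences differ with probability one, in fact for every sample.

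The main obstacle is definitional rather than technical. One has to be careful that the two optimizations range over different function classes: estimators of $(x,\hat{y})$ versus estimators of $(x,f_\theta)$. Otherwise one might argue that the response optimum should be $\mu$, because $\hat{y}$ is unseen. I would state explicitly that $s_{\mathrm{resp}}$ may depend on $\hat{y}$, which makes $\mathcal{C}$ the pointwise optimum. I would also phrase ``unless deterministic or always correct/incorrect'' precisely as $\mu\notin\{0,1\}$, understood almost surely over the sampling distribution.
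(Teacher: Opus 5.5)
Your proposal is correct and follows the paper's argument. The paper likewise minimizes each separable convex Brier summand, via the zero-gradient condition $\frac{2}{N}(s_i-\mathcal{C}(x_i,\hat y_i))=0$ and its analogue with $\mu_i$, and then observes that a $\{0,1\}$-valued target can equal $\mu_i\in[0,1]$ only in the degenerate cases; your reduction of that exception to exactly $\mu\in\{0,1\}$ (with determinism as a subcase) and your identity $\mathbb{E}_{\hat y}[(s-\mathcal{C})^2]=(s-\mu)^2+\mu(1-\mu)$, which is the per-input form of \cref{the: metrics connection}, are slightly sharper than the paper's wording but consistent with it.
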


See Appendix \ref{sec: proof of metrics differ} for the proof. This theoretical divergence is empirically confirmed in \cref{fig: metric_diff main} and \cref{sec: exp evaluation}, where the targets are shown to differ significantly in practice. Having established that these objectives are distinct, we now formalize the connection between the response calibration loss function and the capability calibration loss function: 

\begin{theorem} \label{the: metrics connection}
\textbf{(Decomposition of calibration losses).} Given a set of estimated confidence $\mathbf{s}$, define the expectation of response calibration loss $\mathcal{L}^{\mathrm{response}}_{\mathrm{Brier}}$ on the model's output distribution as 
\[
\mathbb{E}\big[\mathcal{L}^{\mathrm{response}}_{\mathrm{Brier}}\big]
\;\triangleq\;
\frac{1}{N}\sum_{i=1}^N
\mathbb{E}_{\hat{y} \sim f_\theta(\cdot \mid x_i)}
\left[
\big(s_i - \mathcal{C}(x_i,\hat{y})\big)^2
\right].
\]

Decoupling output correctness variance from response calibration, we get capability calibration:
\begin{equation} \label{equ: metrics connection}
    \mathcal{L}_{\mathrm{Brier}}^{\mathrm{capability}} = 
    \underbrace{\mathbb{E}[\mathcal{L}^{\mathrm{response}}_{\mathrm{Brier}}] \vphantom{\sum^N_{i=1}}}_{\text{response calibration}} - 
    \underbrace{\frac{1}{N}\sum^N_{i=1}Var(\mathcal{C}(x_i,\hat{y}))}_{\text{output correctness variance}},
\end{equation}
where
\begin{equation} \label{equ: correctness variance}
\begin{split}
    Var(\mathcal{C}(x_i,\hat{y})) & =  \mathbb{E}_{\hat{y} \sim f_\theta(\cdot \mid x_i)}[\mathcal{C}(x_i,\hat{y})^2] \\ 
    & - \mathbb{E}_{\hat{y} \sim f_\theta(\cdot \mid x_i)}[\mathcal{C}(x_i,\hat{y})]^2.
\end{split}
\end{equation}
\end{theorem}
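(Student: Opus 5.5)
The plan is to prove the identity term by term, for each fixed index $i$, using the bias--variance decomposition of a squared error around a mean. Both sides of \cref{equ: metrics connection} are averages over $i=1,\dots,N$ with the common factor $\frac{1}{N}$. It therefore suffices to show, for every $i$, that
\begin{equation*}
(s_i-\mu_i)^2 = \mathbb{E}_{\hat{y}\sim f_\theta(\cdot\mid x_i)}\big[(s_i-\mathcal{C}(x_i,\hat{y}))^2\big] - Var(\mathcal{C}(x_i,\hat{y})),
\end{equation*}
where $\mu_i=\mathbb{E}_{\hat{y}\sim f_\theta(\cdot\mid x_i)}[\mathcal{C}(x_i,\hat{y})]$ is the expected accuracy from \cref{equ:new confidence definition}.

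The key point is that $s_i$ is a constant with respect to the sampling of $\hat{y}$. This is because the expectation in the definition of $\mathbb{E}[\mathcal{L}^{\mathrm{response}}_{\mathrm{Brier}}]$ is taken with $s_i$ fixed. I would write $\mathcal{C}=\mathcal{C}(x_i,\hat{y})$, add and subtract $\mu_i$, and expand:
\begin{equation*}
(s_i-\mathcal{C})^2=(s_i-\mu_i)^2+2(s_i-\mu_i)(\mu_i-\mathcal{C})+(\mu_i-\mathcal{C})^2.
\end{equation*}
Taking the expectation over $\hat{y}\sim f_\theta(\cdot\mid x_i)$ handles each term separately:
\begin{itemize}
\item the first term is unchanged, since it does not depend on $\hat{y}$;
\item the cross term vanishes, because $\mathbb{E}[\mu_i-\mathcal{C}]=0$ by the definition of $\mu_i$;
\item the last term equals $\mathbb{E}[\mathcal{C}^2]-\mu_i^2$, which is exactly $Var(\mathcal{C}(x_i,\hat{y}))$ as defined in \cref{equ: correctness variance}.
\end{itemize}
Rearranging gives the per-$i$ identity. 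Averaging over $i$ and recognizing $\frac{1}{N}\sum_i(s_i-\mu_i)^2$ as $\mathcal{L}^{\mathrm{capability}}_{\mathrm{Brier}}$ from \cref{equ: new brier score} then completes the argument.

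There is no real obstacle here. The only point needing care is to state explicitly that $s_i$ does not depend on the sampled $\hat{y}$; this is what makes the cross term vanish. I would also remark that the expectation exists because $\mathcal{C}\in\{0,1\}$ is bounded. Since $\mathcal{C}^2=\mathcal{C}$, the variance simplifies to $\mu_i(1-\mu_i)$, which connects the result to \cref{the: optimal confidence differs}: the gap between the two losses vanishes exactly when $\mu_i\in\{0,1\}$.
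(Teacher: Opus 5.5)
Your proof is correct and is essentially the paper's argument. The paper expands $\mathbb{E}[(s_i-\mathcal{C})^2]$ and $(s_i-\mu_i)^2$ separately and subtracts them; the $s_i$-dependent terms cancel because $\mu_i=\mathbb{E}[\mathcal{C}]$, the same fact that kills your cross term after centering at $\mu_i$, and the paper also implicitly treats $s_i$ as fixed under the expectation over $\hat{y}$, as you state explicitly.
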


See Appendix \ref{sec: proof of metrics connection} for the proof. \cref{the: metrics connection} demonstrates that when evaluating a set confidence estimation, capability calibration decouples the \textit{model's output variance} from response calibration. While response calibration penalizes the stochasticity of generated outputs, capability calibration targets the model's underlying probability of correctness.  For strictly convex and differentiable losses, the difference $\mathbb{E}[\mathcal{L}^{response}]-\mathcal{L}^{capability}$ generalizes to the Bregman information \citep{banerjee2005clustering}, which quantifies the gap \citep{ gruber2022uncertainty} caused by output randomness. See Appendix \ref{sec: bregman info} for detailed discussion.

\begin{figure}[t!]
\centering
\includegraphics[width=\columnwidth]{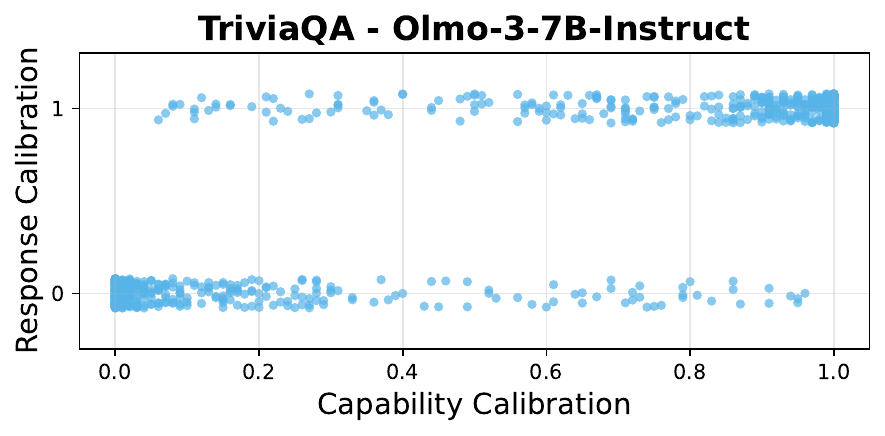}
    \caption{
      \textbf{Divergence of calibration targets.} We plot the Response Calibration (RC) target $\mathcal{C}(x, \hat{y})$ and Capability Calibration (CC) targets $\mathbb{E}_{\hat{y} \sim f_\theta(\cdot \mid x)}[\mathcal{C}(x,\hat{y})]$. The data reveals a divergence between the two targets: instances where the RC label is 0 exhibit CC values spanning the full [0, 1] range. Same observation for instances with an RC label of 1. This confirms that response-level outcomes do not reflect the model's true ability to answer a query.
    }
    \label{fig: metric_diff main}
\end{figure}

\section{Measuring Capability Calibration}
\subsection{Evaluation framework} \label{sec: exp evaluation}
For a given query $x$, an LLM's theoretical expected accuracy $\mu$ defined in \cref{equ:new confidence definition} is not directly accessible, so one has to estimate $\mu$ empirically.
For a given $x$, we estimate the LLM's expected accuracy by sampling $k_\mathrm{eval}$ responses $\{\hat{y}_1,...,\hat{y}_{k_\mathrm{eval}}\}$ from $f_\theta(\cdot \mid x)$.
Let $c$ denote the number of correct responses.
The estimated expected accuracy is:
\[\hat{\mu} = \frac{c}{k_\mathrm{eval}}.\]

\noindent \textbf{Target differs from single response correctness.} Next, we investigate whether estimated expected accuracy or single-response correctness are empirically different calibration targets.
In \cref{fig: metric_diff main}, we compare these two targets using Olmo-3-7B-Instruct on TriviaQA (see \cref{sec: model and dataset details} for setup). Additional results are in Appendix \ref{app: full metric diff}. Consistent with findings in \citet{zhang2025beyondsingular}, our experiments show that LLM outputs are rarely binary; they are neither perfectly deterministic nor consistently correct across inference calls. This variance confirms that capability calibration targets a fundamentally different property than response calibration.

\noindent \textbf{Evaluation metric.}
Since capability calibration targets query-level performance, we require a metric that preserves per-query granularity.
Following the discussion in \cref{sec: cc definition}, we use Brier score to measure calibration quality.
We choose Brier score over ECE because ECE's binning procedure averages predictions within each bin, which could mask calibration errors of individual queries.
Given a dataset of $N$ queries, let $s_i$ denote the confidence estimate for query $x_i$ and $\hat{\mu}_i$ the estimated expected accuracy.
The empirical capability calibration Brier score is defined as:
\[
\mathcal{L}_{\mathrm{Brier}}^{\mathrm{capability}}(\mathbf{s}) = \frac{1}{N}\sum_{i=1}^N (s_i - \hat{\mu}_i)^2.
\]
Lower Brier scores indicate better calibration.

\subsection{Methods for confidence estimation} \label{sec: conf estimators}
\noindent \textbf{Uniform random baseline.}
To assess whether a method delivers meaningful performance, we establish a baseline that uses no information about the query.
For each query $x_i$, we sample a confidence score from a uniform distribution $s_i \sim U(0,1)$. 
This baseline admits an analytic expected loss (see Appendix \ref{sec: implmentation of confidence estimators} for derivation):
\begin{equation} \label{equ: random baseline formula}
\mathbb{E}_{\mathbf{s}}\!\left[\mathcal{L}_{\mathrm{Brier}}^{\mathrm{capability}}(\mathbf{s})\right]
= \frac{1}{N}\sum_{i=1}^N
\left(\tfrac{1}{3} - \hat{\mu}_i + \hat{\mu}_i^2\right),
\end{equation}
which depends only on the model's expected accuracy on the dataset. Any useful confidence estimation method should outperform this baseline.

Next, we introduce confidence estimation methods commonly used in LLM response calibration, and adapt them to our capability calibration setting.

\noindent \textbf{Response consistency} \citep{wang2022self}.
A straightforward way to estimate confidence is by measuring the consistency across multiple sampled responses.
We sample $k_c$ responses and compute the fraction that agree with the majority prediction.
For example, if $k_c=10$ and $7$ responses are equivalent, the confidence estimate would be $0.7$.
Note that this method incurs a higher computational cost than other methods, as it requires $k_c$ forward passes per query.

\noindent \textbf{Verbalized confidence.}
We instruct the LLM to report a probability in $[0,1]$ in natural language.
Unlike prior work~\cite{lin2022teaching,tian2023just}, which asks for confidence in the \emph{response} given the query, we ask for confidence in the \emph{query} itself to measure query-level capability.
The prompt is provided in Appendix \ref{sec: implmentation of confidence estimators}.

\noindent \textbf{P(True).}
We ask the model whether it can answer the \emph{query} correctly by instructing it to respond with only ``Yes'' or ``No''.
We extract the logprobs of these two tokens and use the softmax probability of ``Yes'' as the confidence estimate.
Unlike prior work~\citep{kadavath2022language}, which provides both the \emph{query} and the \emph{response}, we present only the \emph{query}.
The prompt is provided in Appendix \ref{sec: implmentation of confidence estimators}.

\noindent \textbf{Probing LLMs' hidden states} \cite{li2021implicit}.
We train linear probes on LLMs' internal representations to predict query-level confidence.
Specifically, we mean-pool activations from the last input token across transformer blocks to output a confidence score.
This approach incurs minimal overhead, with an inference cost less than decoding a single token.
See Appendix~\ref{app:probing} for implementation details.

\noindent \textbf{Notable properties:}
Response consistency and verbalized confidence are black-box methods applicable to API-based LLMs without access to token logprobs.
P(True) is a gray-box method requiring access to token logprobs.
Probing is a white-box method requiring open-weight models.

\begin{table*}[t!]
\centering
\small
\caption{\textbf{Capability calibration performance of different methods with three LLMs on seven datasets.} For probes, we use different colors to indicate \colorbox{LGreen!25}{in-domain in-distribution}, \colorbox{LBlue!25}{in-domain out-of-distribution}, and \colorbox{LOrange!25}{out-domain} performance. We use \textbf{bold} to denote the \textbf{best} calibrated method, and \underline{underline} to denote the \underline{second best}.
Probe performs the best under \colorbox{LGreen!25}{in-domain in-distribution} settings and generalizes reasonably well under \colorbox{LBlue!25}{in-domain out-distribution} settings.
Verbalized confidence and P(True) results differ across LLMs.
}
\label{tab:main-results}
\begin{tabularx}{\textwidth}{l c | Y Y Y Y Y Y Y}
\toprule
\textbf{Brier score (↓)} & \textbf{Domain} & \multicolumn{2}{c}{\textbf{Factual knowledge}} & \multicolumn{3}{c}{\textbf{Mathematical reasoning}} & \multicolumn{2}{c}{\textbf{General exams}} \\
\cmidrule(lr){3-4} \cmidrule(lr){5-7} \cmidrule(lr){8-9}
\textbf{Method} & \textbf{Cost} & \textbf{TriviaQA} & \textbf{SimpleQA} & \textbf{GSM8K} & \textbf{\mbox{MATH}} & \textbf{AIME25} & \textbf{MMLU} & \textbf{GPQA} \\
\midrule
\multicolumn{2}{l|}{\textbf{Olmo-3-7B-Instruct}} & \multicolumn{7}{c}{} \\
\hspace{3mm}Uniform random baseline & N/A & 0.2745 & 0.3133 & 0.3119 & 0.2940 & 0.2462 & 0.2565 & 0.2125 \\
\hspace{3mm}Verbalized confidence & $L$ & 0.2624 & 0.2676 & 0.0462 & 0.0557 & 0.2002 & 0.1561 & 0.2742 \\
\hspace{3mm}P(True) & 1 & \underline{0.1933} & \underline{0.0419} & 0.1282 & 0.1400 & 0.1854 & 0.2164 & 0.1553 \\
\hspace{3mm}Probe (train on TriviaQA) & $<1$ & \cellcolor{LGreen!25}\textbf{0.1113} & \cellcolor{LBlue!25}\textbf{0.0386} & \cellcolor{LOrange!25}0.1180 & \cellcolor{LOrange!25}0.1273 & \cellcolor{LOrange!25}\underline{0.1496} & \cellcolor{LOrange!25}0.1300 & \cellcolor{LOrange!25}\textbf{0.1242} \\
\hspace{3mm}Probe (train on GSM8K) & $<1$ & \cellcolor{LOrange!25}0.2648 & \cellcolor{LOrange!25}0.5465 & \cellcolor{LGreen!25}\textbf{0.0370} & \cellcolor{LBlue!25}\underline{0.0545} & \cellcolor{LBlue!25}0.2482 & \cellcolor{LOrange!25}\textbf{0.1200} & \cellcolor{LOrange!25}0.1628 \\
\hspace{3mm}Probe (train on MATH) & $<1$ & \cellcolor{LOrange!25}0.2550 & \cellcolor{LOrange!25}0.4846 & \cellcolor{LBlue!25}\underline{0.0388} & \cellcolor{LGreen!25}\textbf{0.0394} & \cellcolor{LBlue!25}\textbf{0.1411} & \cellcolor{LOrange!25}\underline{0.1255} & \cellcolor{LOrange!25}\underline{0.1295} \\
\midrule
\multicolumn{2}{l|}{\textbf{Qwen3-8B}} & \multicolumn{7}{c}{}  \\
\hspace{3mm}Uniform random baseline & N/A & 0.2865 & \underline{0.3109} & 0.3144 & 0.2781 & 0.2800 & 0.2868 & 0.2113 \\
\hspace{3mm}Verbalized confidence & $L$ & 0.2431 & 0.4736 & 0.0461 & 0.0962 & 0.4443 & 0.1293 & 0.2773 \\
\hspace{3mm}P(True) & 1 & 0.2970 & 0.6072 & 0.0482 & 0.1126 & 0.4957 & 0.1597 & 0.3448 \\
\hspace{3mm}Probe (train on TriviaQA) & $<1$ & \cellcolor{LGreen!25}\textbf{0.1079} & \cellcolor{LBlue!25}\textbf{0.0638} & \cellcolor{LOrange!25}0.3177 & \cellcolor{LOrange!25}0.3219 & \cellcolor{LOrange!25}0.1286 & \cellcolor{LOrange!25}0.2006 & \cellcolor{LOrange!25}\underline{0.1556} \\
\hspace{3mm}Probe (train on GSM8K) & $<1$ & \cellcolor{LOrange!25}\underline{0.1885} & \cellcolor{LOrange!25}0.4451 & \cellcolor{LGreen!25}\textbf{0.0368} & \cellcolor{LBlue!25}\underline{0.0715} & \cellcolor{LBlue!25}\textbf{0.0740} & \cellcolor{LOrange!25}\underline{0.1176} & \cellcolor{LOrange!25}\textbf{0.1556} \\
\hspace{3mm}Probe (train on MATH) & $<1$ & \cellcolor{LOrange!25}0.2977 & \cellcolor{LOrange!25}0.8297 & \cellcolor{LBlue!25}\underline{0.0408} & \cellcolor{LGreen!25}\textbf{0.0475} & \cellcolor{LBlue!25}\underline{0.0831} & \cellcolor{LOrange!25}\textbf{0.1163} & \cellcolor{LOrange!25}0.1811 \\
\midrule
\multicolumn{2}{l|}{\textbf{gpt-oss-20b}} & \multicolumn{7}{c}{}   \\
\hspace{3mm}Uniform random baseline & N/A & 0.2639 & 0.3010 & 0.3195 & 0.3063 & 0.2369 & 0.3018 & 0.2388 \\
\hspace{3mm}Verbalized confidence & $L$ & \underline{0.1266} & \underline{0.1957} & \textbf{0.0268} & \underline{0.0275} & \textbf{0.0460} & \textbf{0.0559} & \textbf{0.1174} \\
\hspace{3mm}P(True) & $L$ & 0.2101 & 0.6151 & 0.0306 & 0.0321 & \underline{0.1092} & 0.0817 & 0.2082 \\
\hspace{3mm}Probe (train on TriviaQA) & $<1$ & \cellcolor{LGreen!25}\textbf{0.0845} & \cellcolor{LBlue!25}\textbf{0.0600} & \cellcolor{LOrange!25}0.0780 & \cellcolor{LOrange!25}0.1593 & \cellcolor{LOrange!25}0.1457 & \cellcolor{LOrange!25}0.0977 & \cellcolor{LOrange!25}0.1533 \\
\hspace{3mm}Probe (train on GSM8K) & $<1$ & \cellcolor{LOrange!25}0.1756 & \cellcolor{LOrange!25}0.7048 & \cellcolor{LGreen!25}\underline{0.0289} & \cellcolor{LBlue!25}0.0485 & \cellcolor{LBlue!25}0.1213 & \cellcolor{LOrange!25}\underline{0.0686} & \cellcolor{LOrange!25}0.2010 \\
\hspace{3mm}Probe (train on MATH) & $<1$ & \cellcolor{LOrange!25}0.1577 & \cellcolor{LOrange!25}0.5871 & \cellcolor{LBlue!25}0.0332 & \cellcolor{LGreen!25}\textbf{0.0267} & \cellcolor{LBlue!25}0.1644 & \cellcolor{LOrange!25}0.0922 & \cellcolor{LOrange!25}\underline{0.1363} \\
\bottomrule
\end{tabularx}
\end{table*}

\subsection{Experiments}
\subsubsection{Setup} \label{sec: model and dataset details}
\noindent \textbf{Choice of $k_\mathrm{eval}$.}
The estimated expected accuracy $\hat{\mu}$ is a binomial proportion with variance $\mu(1-\mu) / k_\mathrm{eval}$, which decreases as $k_\mathrm{eval}$ increases.
We investigate the effect of $k_\mathrm{eval}$ on evaluation reliability in Appendix~\ref{sec: the choice of k} and chose $k_\mathrm{eval}=100$ to balance cost and reliability.
We then evaluate the methods on three LLMs across seven datasets:

\noindent \textbf{Models.}
We use Olmo-3-7B-Instruct~\cite{olmo2025olmo}, Qwen3-8B~\cite{yang2025qwen3}, and gpt-oss-20b~\cite{agarwal2025gpt} to capture model diversity.
Sampling hyperparameters follow Appendix~\ref{sec: sampling hyperparameters}.

\noindent \textbf{Datasets.}
We select datasets from three task domains:
(1) \emph{factual knowledge}, which tests parametric knowledge;
(2) \emph{mathematical reasoning}, where errors compound across multiple intermediate steps; and
(3) \emph{general exams}, which test both knowledge and reasoning in multiple subjects.
For each type, we include datasets of different difficulty levels.

\noindent \textbf{Factual knowledge:} We choose TriviaQA~\cite{joshi2017triviaqa} as the easier dataset and SimpleQA verified~\cite{haas2025simpleqa} as the harder one.

\noindent \textbf{Mathematical reasoning:} We adopt GSM8K~\cite{cobbe2021training} as the easiest dataset, MATH-500~\cite{lightman2023let} as the intermediate one, and AIME25 as the hardest.

\noindent \textbf{General exams:}
We use MMLU~\cite{hendrycks2020measuring}, which spans 57 subjects in humanities, social science, STEM, and others.
We also use GPQA~\cite{rein2023gpqa}, which is harder than MMLU and includes graduate-level questions in biology, chemistry, and physics.

\subsubsection{Results and Discussion}
\label{sec:results}
\noindent \textbf{Probing has the best cost-performance tradeoff.}
A practically useful method should satisfy two properties:
(1) \emph{Acceptable inference cost}: no higher than decoding the response itself, otherwise the overhead would limit the method's practical utility (see \S\ref{sec:applications}).
(2) \emph{Good calibration performance}: lower Brier score is better.
Figure~\ref{fig:cost-performance-subplot} shows a representative example, with full results shown in Figure~\ref{fig:cost-performance-fullplot}.
Among evaluated methods, probing has the lowest inference cost while consistently outperforming the random baseline.

\begin{figure}[t!]
\centering
\includegraphics[width=0.911\columnwidth]{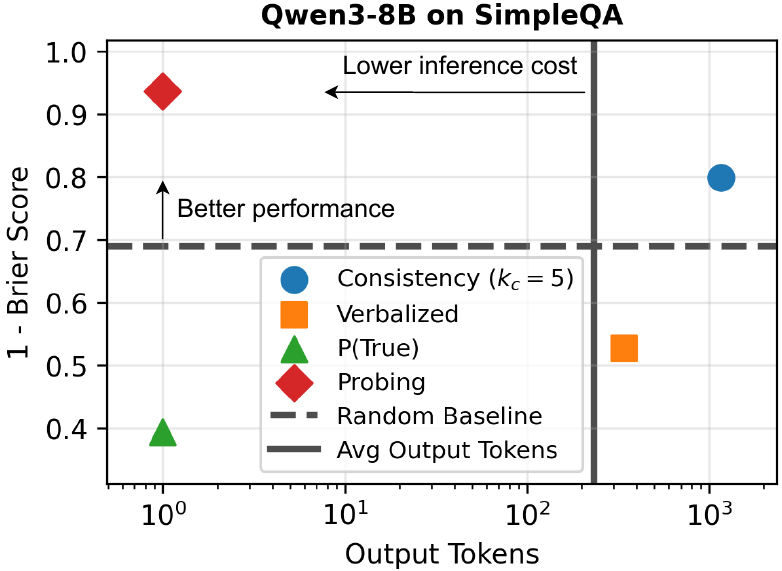}
    \caption{
        \textbf{Cost-performance tradeoff of different methods.}
        We compare inference cost (x-axis, log-scale) against calibration performance (y-axis, 1 - Brier score), where the upper-left corner is the ideal region.
        Among evaluated methods, probing is the only one that consistently falls in this region (see Figure~\ref{fig:cost-performance-fullplot}).
        For readability, we only plot the best-calibrated probe.
    }
    \label{fig:cost-performance-subplot}
\end{figure}

\begin{figure*}[t!]
\centering
\includegraphics[width=\textwidth]{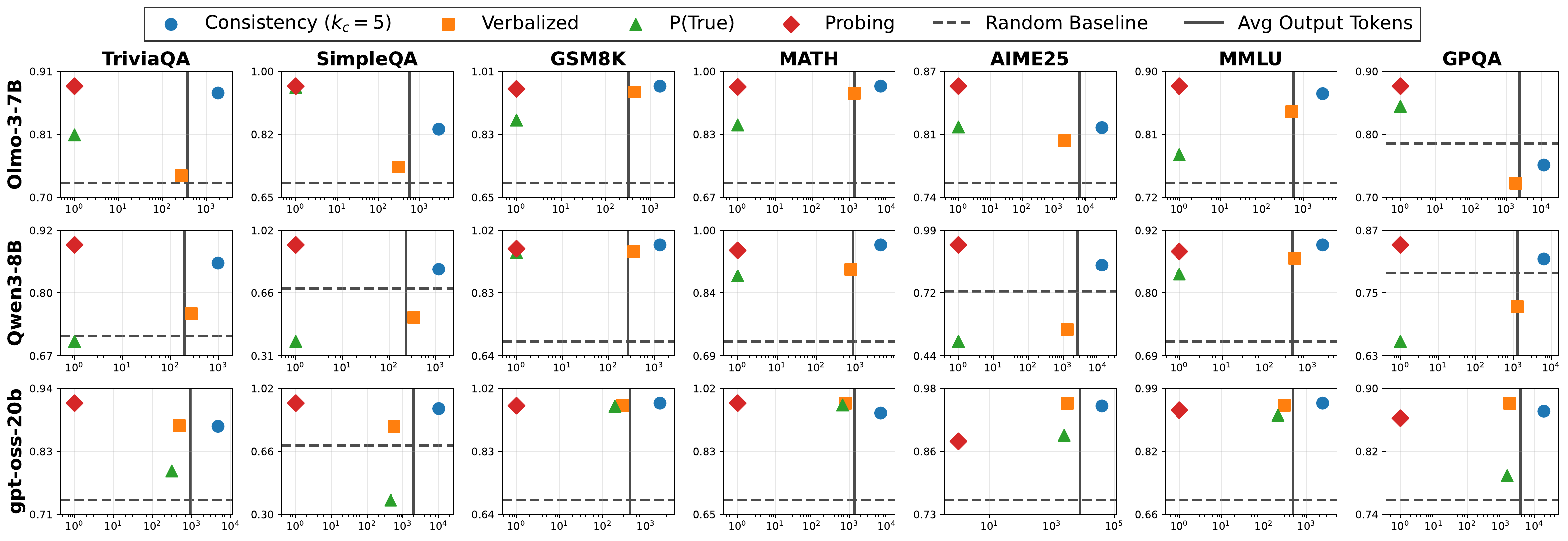}
    \caption{
        \textbf{Cost-performance tradeoff of different confidence estimation methods with three LLMs on seven datasets.}
        Following Figure~\ref{fig:cost-performance-subplot}, we compare  inference cost (x-axis, average response tokens) against calibration performance (y-axis).
        Probing consistently outperforms the random baseline while incurring the lowest cost, while response consistency incurs a cost higher than decoding responses.
    }
    \label{fig:cost-performance-fullplot}
\end{figure*}

\noindent \textbf{How well does probing generalize?}
Table~\ref{tab:main-results} shows that probing performs well under in-domain, in-distribution settings.
However, some applications may require applying a confidence estimator to
(1) \textit{same-domain but out-of-distribution} queries (e.g., different factual knowledge datasets), or
(2) \textit{out-of-domain} queries (e.g., training on factual knowledge but applying to mathematical reasoning).
Overall, probing generalizes reasonably well under in-domain, out-of-distribution settings, especially for the factual knowledge domain.
However, it does not consistently generalize to out-of-domain settings.
Developing generalizable methods for capability calibration remains an important direction for future work.

\noindent \textbf{Performance of verbalized confidence and P(True) differs across LLMs.}
As shown in Table~\ref{tab:main-results}, gpt-oss-20b performs strikingly well with verbalized confidence, achieving the best or second-best performance across datasets.
In contrast, Olmo-3-7B-Instruct and Qwen3-8B do not even consistently outperform the random uniform baseline with verbalized confidence.
Moreover, verbalized confidence outperforms P(True) for Qwen3-8B and gpt-oss-20b, but not for Olmo-3-7B-Instruct.
These results suggest that the effectiveness of these methods is model-dependent.

\noindent \textbf{Response consistency costs more than responding.}
This method costs more than decoding the response itself (see Figure~\ref{fig:cost-performance-subplot} and~\ref{fig:cost-performance-fullplot}), rendering it impractical for applications where query-level confidence must be estimated before decoding, such as for inference budget allocation (see \S\ref{sec: budget alloc}).
We report its calibration performance in Appendix~\ref{sec: response consistency additional results}.

\section{Applications}
\label{sec:applications}
In this section, we show that capability calibration has broad applicability to several applications.

\subsection{Pass@$k$ simulation} \label{sec: passk simulation}
Test-time scaling via repeated sampling has been shown to enhance LLM capabilities \citep{brown2024large}, while simultaneously increasing vulnerability to AI safety risks \citep{schaeffer2025large, kazdan2025efficient}. Given these trade-offs, the ability to estimate resampling performance at a low inference cost is critical for both researchers and developers \citep{stroebl2024inference}. A common approach to this problem, as proposed by \citet{kazdan2025efficient}, is to predict pass@$k$ performance by sampling only a small subset of outputs. Their method assumes that the expected accuracy of each instance in a dataset follows a beta distribution.

In this section, we show that capability-calibrated confidence estimation can simulate the pass@$k$ performance of each instance without (1) sampling multiple outputs and (2) assuming a prior distribution over the dataset. Furthermore, by computing the pass@$k$ success rate for each instance, we can estimate the pass@$k$ curve for the entire dataset. We discuss the simulation process in Appendix \ref{sec: pass@k sim process}.

We evaluate three confidence estimators: (1) Oracle Response-Calibrated (Oracle-RC), (2) Oracle Capability-Calibrated (Oracle-CC), and (3) Probe-MATH, which is trained on the MATH-train dataset \citep{hendrycksmath2021} with CC target. To calculate the real pass@$k$ performance, we use the unbiased estimator \citet{chen2021evaluating}. We use Mean Squared Error (MSE) to measure the instance-level discrepancy between simulated and actual pass@$k$ performance. \cref{tab:pass_at_k_math} presents the simulation results for MATH-500. Results for AIME25 and results on the dataset-level pass@$k$ curve are provided in Appendix \ref{sec: more passk simulation}. Experiment results demonstrate the effectiveness of capability calibration for pass@$k$ simulation. Since Oracle-CC is the expected accuracy defined in \cref{equ:new confidence definition}, it simulates ground-truth performance almost perfectly. In contrast, Oracle-RC focuses on single-response correctness, which is a noisy estimate of expected accuracy, causing MSE to increase at higher $k$. Finally, Probe-MATH outperforms Oracle-RC by effectively approximating the expected accuracy.

\begin{table}[ht!]
\centering
\caption{\textbf{Pass@$k$ simulation error (MSE) on the MATH-500 dataset.} We evaluate the ability of different confidence estimators to simulate empirical pass@$k$ performance. Perfectly capability-calibrated confidence (Oracle CC) achieves near-perfect simulation, whereas the error of perfectly response-calibrated confidence (Oracle RC) increases as $k$ scales. Notably, our trained estimator (Probe-MATH) outperforms the Oracle RC baseline across all models by approximating the model's expected accuracy.}
\label{tab:pass_at_k_math}
\resizebox{\columnwidth}{!}{
\begin{tabular}{p{2.8cm}cccc}
\toprule
\textbf{Method} & \textbf{pass@1} & \textbf{pass@4} & \textbf{pass@16} & \textbf{pass@64} \\ 
\midrule
\multicolumn{5}{l}{\textbf{Olmo-3-7B-Instruct}} \\
\hspace{3mm}Oracle RC & 0.0370 & 0.0556 & 0.0746 & 0.0935 \\
\hspace{3mm}Oracle CC & 0.0000 & 0.0000 & 0.0000 & 0.0003 \\
\hspace{3mm}Probe-MATH & 0.0394 & 0.0386 & 0.0243 & 0.0148 \\ 
\midrule
\multicolumn{5}{l}{\textbf{Qwen3-8B}} \\
\hspace{3mm}Oracle RC & 0.0543 & 0.0872 & 0.1225 & 0.1486 \\
\hspace{3mm}Oracle CC & 0.0000 & 0.0000 & 0.0000 & 0.0003 \\
\hspace{3mm}Probe-MATH & 0.0475 & 0.0446 & 0.0304 & 0.0205 \\ 
\midrule
\multicolumn{5}{l}{\textbf{gpt-oss-20b}} \\
\hspace{3mm}Oracle RC & 0.0271 & 0.0402 & 0.0545 & 0.0629 \\
\hspace{3mm}Oracle CC & 0.0000 & 0.0000 & 0.0000 & 0.0001 \\
\hspace{3mm}Probe-MATH & 0.0267 & 0.0175 & 0.0099 & 0.0063 \\ 
\bottomrule
\end{tabular}
}
\end{table}

\subsection{Inference budget allocation} \label{sec: budget alloc}
Allocating test-time computation has been shown to improve language model performance \citep{damani2024learning, zhang2024scaling,snell2024scaling}. In the best-of-$k$ setting, \citet{damani2024learning} investigates \textit{how to solve as many problems as possible under a fixed sampling budget}. Their approach involves distributing the total computational budget across a dataset of queries prior to generating answers. They optimize the budget allocation by allocating more resources to questions based on their difficulty, which has been shown to outperform uniform allocation. Specifically, they learn a reward model to estimate the marginal improvement (gain)
in the success rate achieved by allocating one additional unit of compute to a query. The detailed algorithm is discussed in Appendix \ref{sec: mit greedy algorithm}.

The "gain" metric defined by \citet{damani2024learning} relies directly on expected accuracy formulated in \cref{equ:new confidence definition}. Consequently, capability-calibrated confidence allows us to analytically estimate this gain and apply the greedy allocation algorithm detailed in Appendix \ref{sec: mit greedy algorithm} for inference budget allocation. We evaluate three confidence estimators: (1) Oracle, the perfectly capability-calibrated confidence; (2) Probe-MATH, a high-performing confidence estimator equivalent to the Online Ada-BoK method \citep{damani2024learning}; and (3) Verbalized Confidence (Verbalized), an estimator that is applicable to black-box models.

Experimental results validate the effectiveness of capability-calibrated confidence in inference budget allocation. \cref{fig: budget_allo_gpt_math} illustrates the performance of gpt-oss-20b on MATH-500; additional results for other models and datasets are provided in Appendix \ref{sec: more budget_alloc results}. Consistent with findings in \citet{damani2024learning}, the Oracle estimator yields the best performance across all compute budgets, and Probe-MATH consistently outperforms uniform allocation. Furthermore, we discover that verbalized confidence achieves results comparable to Probe-MATH without requiring access to internal model states. This implies that the performance benefits of leveraging capability-calibrated confidence can be applied to API-based LLMs.

\begin{figure}[t!]
\centering
\includegraphics[width=\columnwidth]{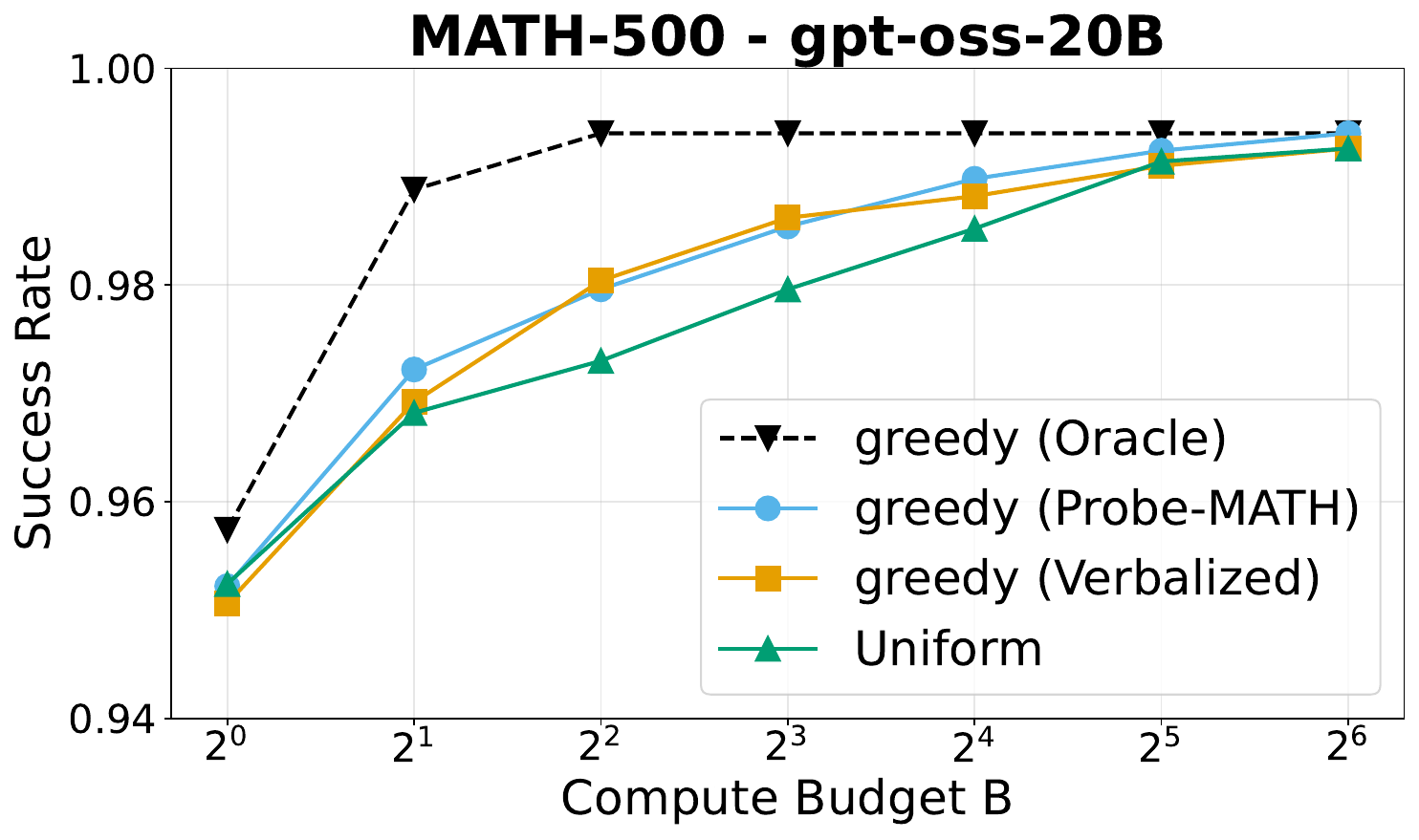}
    \caption{
      \textbf{Inference budget allocation performance of capability-calibrated confidence.} Given $N$ questions, we evaluate the performance (success rate) of different methods under the fixed inference budget $N\times B$. The Oracle capability-calibrated confidence achieves the best performance. Meanwhile, confidence estimators (verbalized and Probe-MATH) both outperform the Uniform allocation in various budgets.
    }
    \label{fig: budget_allo_gpt_math}

\end{figure}
\subsection{Other applications}

Beyond our primary experiments, capability-calibrated confidence can enhance system reliability through selective prediction \citep{kamath2020selective} and active query refinement \citep{wu2024need}. It also supports efficient infrastructure via model routing \citep{ong2024routellm} and cost estimation \citep{wu2024inference}, as well as advanced training techniques like curriculum learning \citep{zhang2025beyond} and label-free benchmarking  \citep{guha2024smoothie}.
As an initial investigation into capability calibration, we prioritize two critical applications (\S\ref{sec: passk simulation} and \S\ref{sec: budget alloc}) where performance is directly related to the model's expected accuracy. Although we also identify other promising applications, a comprehensive empirical evaluation of all downstream tasks is beyond the scope of this work. Nonetheless, we provide a conceptual discussion of how capability calibration can be integrated into these broader domains in Appendix \ref{sec: conceptual discussion with other appli.}.

\section{Conclusion}
This work formalizes capability calibration and shows that it differs from response calibration due to the stochastic nature of LLM outputs.
Our experiments identify linear probing on model activations as a practical method that achieves non-trivial calibration performance at minimal computational overhead, and demonstrate its downstream value through efficient pass@$k$ prediction and inference budget allocation.
We see two promising research directions: 
(1) developing methods that push the frontier of capability calibration performance; 
(2) extending this framework to more applications, such as model routing, human-AI collaboration, and trustworthy AI.

\clearpage
\section*{Impact Statement}
This paper presents work whose goal is to advance the field of Machine Learning by improving the reliability and predictability of LLMs. As LLMs are increasingly deployed in real-world applications, ensuring they are trustworthy is paramount. Our framework for capability calibration enables models to more accurately assess their own limitations, allowing systems to abstain or seek human oversight when the model is unlikely to succeed. We believe this contributes to safer AI deployment by mitigating the risks associated with overconfidence and hallucination. There are no significant negative societal consequences that we feel must be specifically highlighted here.

\section*{Acknowledgements}
We would like to thank Appier AI Research team members, Hsuan-Tien Lin (National Taiwan University) and Wei-Lin Chen (University of Virginia), for their feedback on this work.
This work was supported in part by the National Science and Technology Council, Taiwan, under the Grant 114-2628-E-002-021-, and the Taiwan Centers of Excellence. Shao-Hua Sun was supported by the Yushan Fellow Program of the Ministry of Education, Taiwan.

\bibliography{reference}
\bibliographystyle{icml2026}

\newpage
\appendix
\onecolumn

\vspace{1em}
\begin{center}
    \LARGE \textbf{Appendix}
\end{center}

\vspace{1em}

\noindent The appendix contains the following section.

\vspace{1em}

\begin{itemize} \setlength\itemsep{0.5em}

    \item \textbf{\hyperref[sec:details_of_CC]{Details of Capability Calibration}} \dotfill \pageref{sec:details_of_CC}
    \begin{itemize} \setlength\itemsep{0.3em}
        \item \hyperref[sec: proof of metrics differ]{Proof of \cref{the: optimal confidence differs}} \dotfill \pageref{sec: proof of metrics differ}
        \item \hyperref[sec: proof of metrics connection]{Proof of \cref{the: metrics connection}} \dotfill \pageref{sec: proof of metrics connection}
        \item \hyperref[sec: bregman info]{Connection between response calibration loss and capability calibration loss} \dotfill \pageref{sec: bregman info}
    \end{itemize}

    \item \textbf{\hyperref[sec:experiment_setup]{Details of Experiment Setup}} \dotfill \pageref{sec:experiment_setup}
    \begin{itemize} \setlength\itemsep{0.3em}
        \item \hyperref[sec: the choice of k]{The choice of $k_\mathrm{eval}$} \dotfill \pageref{sec: the choice of k}
        \item \hyperref[sec: sampling hyperparameters]{Sampling hyperparameters for each LLM} \dotfill \pageref{sec: sampling hyperparameters}
        \item \hyperref[sec: llm mean expected acc]{LLMs' mean expected accuracies across datasets} \dotfill \pageref{sec: llm mean expected acc}
    \end{itemize}

    \item \textbf{\hyperref[sec:confidence_estimation]{Details of Confidence Estimation Methods}} \dotfill \pageref{sec:confidence_estimation}
    \begin{itemize} \setlength\itemsep{0.3em}
        \item \hyperref[sec:detailed_related_works]{Detailed descriptions of existing methods} \dotfill \pageref{sec:detailed_related_works}
        \item \hyperref[sec: implmentation of confidence estimators]{Implementation details of confidence estimators} \dotfill \pageref{sec: implmentation of confidence estimators}
        \item \hyperref[sec: response consistency additional results]{Analysis of $k_c$ for Response Consistency method} \dotfill \pageref{sec: response consistency additional results}
        \item \hyperref[sec: mix of train data]{Mixing training dataset from different domains} \dotfill \pageref{sec: mix of train data}
    \end{itemize}

    \item \textbf{\hyperref[sec:dec_applications]{Details of the Applications}} \dotfill \pageref{sec:dec_applications}
    \begin{itemize} \setlength\itemsep{0.3em}
        \item \hyperref[sec: pass@k sim details]{Pass@$k$ simulation details} \dotfill \pageref{sec: pass@k sim details}
        \item \hyperref[sec:budget_allocation_details]{Inference budget allocation details} \dotfill \pageref{sec:budget_allocation_details}
        \item \hyperref[sec: conceptual discussion with other appli.]{Detailed connection with other applications} \dotfill \pageref{sec: conceptual discussion with other appli.}
    \end{itemize}

    \item \textbf{\hyperref[app: full metric diff]{Targets Difference}} \dotfill \pageref{app: full metric diff}

\end{itemize}
\clearpage

\section{Details of Capability Calibration} \label{sec:details_of_CC}
\subsection{Proof of \cref{the: optimal confidence differs}} \label{sec: proof of metrics differ}
\begin{proof}
Let $f_\theta(\cdot \mid x)$ be a generative model.
For each input $x_i$, let $\hat y_i \sim f_\theta(\cdot\mid x_i)$ be a single
sampled output, and let $s_i \in [0,1]$ denote the predicted confidence.

Recall that the response calibration Brier score (\cref{equ: classical brier score}) is defined as:
\[
\mathcal{L}^{\mathrm{response}}_{\mathrm{Brier}}(s_1,\dots,s_N)
\triangleq
\frac{1}{N}\sum_{i=1}^N
\left(
s_i - \mathcal{C}(x_i,\hat y_i)
\right)^2,
\quad
\hat y_i \sim f_\theta(\cdot\mid x_i).
\]

Define the confidence estimations of the dataset as $\mathbf{s} = (s_1, s_2, ..., s_n)$. Since the Brier score is convex, the stationary point is the global minimum, which is
\[
\nabla \mathcal{L}^{\mathrm{response}}_{\mathrm{Brier}}(\mathbf{s}^*) = \mathbf{0},
\]
i.e.,
\[
\frac{\partial \mathcal{L}^{\mathrm{response}}_{\mathrm{Brier}}}{\partial s_i}
=
\frac{2}{N}(s_i - \mathcal{C}(x_i,\hat y_i)) =0, \quad \forall i \in \{1,\dots,N\}.
\]

Thus, the optimal confidence estimation is
\[
s_i^{*,\mathrm{response}} = \mathcal{C}(x_i,\hat y_i),
\quad \forall i \in \{1,\dots,N\}.
\]

Follow the definition of \cref{equ:new confidence definition}
\[
\mu_i \triangleq
\mathbb{E}_{\hat{y}\sim f_\theta(\cdot\mid x_i)}[\mathcal{C}(x_i,\hat{y})],
\]

we restate the capability calibration Brier score from \cref{equ: new brier score}:
\[
\mathcal{L}_{\mathrm{Brier}}^{\mathrm{capability}}(s_1,\dots,s_N)
\triangleq
\frac{1}{N}\sum_{i=1}^N (s_i - \mu_i)^2
=
\frac{1}{N}\sum_{i=1}^N
\left(
s_i -
\mathbb{E}_{\hat{y}\sim f_\theta(\cdot\mid x_i)}
[\mathcal{C}(x_i,\hat{y})]
\right)^2.
\]

Following the same proof, the optimal confidence that minimize $\mathcal{L}_{\mathrm{Brier}}^{\mathrm{capability}}$ satisfies
\[
s_i^{*,\mathrm{capability}} = \mu_i,
\quad \forall i \in \{1,\dots,N\}.
\]
For a generative model, $\mathcal{C}(x_i,\hat y_i)\in\{0,1\}$,
while $\mu_i \in [0,1]$.
Unless the model is deterministic or perfectly correct/incorrect on $x_i$,
we have
\[
\mathcal{C}(x_i,\hat y_i) \neq \mu_i \triangleq
\mathbb{E}_{\hat{y}\sim f_\theta(\cdot\mid x_i)}[\mathcal{C}(x_i,\hat{y})].
\]
Therefore, 
\[
(s_1^{*,\mathrm{response}},\dots,s_N^{*,\mathrm{response}})
\neq
(s_1^{*,\mathrm{capability}},\dots,s_N^{*,\mathrm{capability}}),
\]
i.e., the two calibration objectives induce different optimal confidence
predictors over the dataset.
\end{proof}

\subsection{Proof of \cref{the: metrics connection}} \label{sec: proof of metrics connection}
\begin{proof}
First, expand the expectation of the Brier Score loss on the old calibration 
\begin{equation}
\label{equ:old_loss}
\mathbb{E}\big[\mathcal{L}_{\mathrm{Brier}}\big]
\;\triangleq\;
\frac{1}{N}\sum_{i=1}^N
\mathbb{E}_{\hat{y} \sim f_\theta(\cdot \mid x_i)}
\left[
\big(s_i - \mathcal{C}(x_i,\hat{y})\big)^2
\right]
= \frac{1}{N}\sum_{i=1}^N \Big( s_i^2 - 2 s_i \cdot \mathbb{E}_{\hat{y} \sim f_\theta(\cdot \mid x_i)}[\mathcal{C}(x_i,\hat{y})] + \mathbb{E}_{\hat{y} \sim f_\theta(\cdot \mid x_i)}[\mathcal{C}(x_i,\hat{y})^2] \Big). 
\end{equation}

Second, expand the Brier Score loss on expected calibration 
\begin{equation}
\label{equ:new_loss}
\mathcal{L}_{\mathrm{Brier}}^{\mathrm{expected}} 
\triangleq \frac{1}{N}\sum_{i=1}^N (s_i - \mu_i)^2
= \frac{1}{N}\sum_{i=1}^N \big( s_i^2 - 2 s_i\, \mu_i + \mu_i^2 \big),
\end{equation}

\noindent
Subtracting \cref{equ:new_loss} - \cref{equ:old_loss}, we obtain
\begin{equation}
\begin{split}
    \mathcal{L}_{\mathrm{Brier}}^{\mathrm{expected}} - \mathbb{E}[\mathcal{L}_{\mathrm{Brier}}]
    & = \frac{1}{N}\sum_{i=1}^N \big( \mu_i^2 -2s_i\cdot (\mu_i -\mathbb{E}_{\hat{y} \sim f_\theta(\cdot \mid x_i)}[\mathcal{C}(x_i,\hat{y})])- \mathbb{E}_{\hat{y} \sim f_\theta(\cdot \mid x_i)}[\mathcal{C}(x_i,\hat{y})^2] \big) \\
& = \frac{1}{N}\sum_{i=1}^N \big( \mu_i^2 - \mathbb{E}_{\hat{y} \sim f_\theta(\cdot \mid x_i)}[\mathcal{C}(x_i,\hat{y})^2] \big) \\
& = \frac{1}{N}\sum_{i=1}^N \big( \mathbb{E}_{\hat{y} \sim f_\theta(\cdot \mid x_i)}[\mathcal{C}(x_i,\hat{y})]^2 - \mathbb{E}_{\hat{y} \sim f_\theta(\cdot \mid x_i)}[\mathcal{C}(x_i,\hat{y})^2] \big) \\
& = - \frac{1}{N}\sum_{i=1}^N \mathrm{Var}[\mathcal{C}(x_i,\hat{y})].
\end{split}
\end{equation}

\end{proof}

\subsection{Connection between response calibration loss and capability calibration loss}
\label{sec: bregman info}
For the same loss function $\mathcal{L}(s,t)$ that evaluates the agreement between confidence $s$ and target $t$, capability calibration and response calibration evaluate $s$ with different $t$. Denote the loss function as $\mathcal{L}_s(t)$ when we discuss the impact of $t$.

Capability calibration takes the expected accuracy as the target. In general, capability calibration loss is  
\[
\mathcal{L}_s(\mathbb{E}_{\hat{y} \sim f(\cdot|x)}[\mathcal{C}(x,\hat{y})]),
\]
we simplify it as $\mathcal{L}_s(\mathbb{E}[\mathcal{C}(x,\hat{y})])$.

Response calibration takes the individual response's correctness as the target. Therefore, the expectation of the response calibration loss is
\[
\mathbb{E}_{\hat{y} \sim f(\cdot|x)}[\mathcal{L}_s(\mathcal{C}(x,\hat{y}))],
\]
we simplify it as $\mathbb{E}[\mathcal{L}_s(\mathcal{C}(x,\hat{y}))]$.

The difference between the capability calibration loss and the response calibration loss
\begin{equation}
    \mathbb{E}[\mathcal{L}_s(\mathcal{C}(x,\hat{y}))] - \mathcal{L}_s(\mathbb{E}[\mathcal{C}(x,\hat{y})]),
\end{equation}

is known as Jensen Gap \citep{reid2011information}. For loss functions that are strictly convex and differentiable, Jensen Gap is equivalent to the Bregman information \citep{banerjee2005clustering}, which measures the diversity of the random variable $\mathcal{C}(x,\hat{y})$ through the lens of the loss function $\mathcal{L}$. For square error loss, the Bregman information is the variance of $\mathcal{C}(x,\hat{y})$ \citep{banerjee2005clustering}. 

\clearpage

\section{Details of Experiment Setup}\label{sec:experiment_setup}
\subsection{The choice of $k_\mathrm{eval}$} \label{sec: the choice of k}

\paragraph{Theoretical insights.}
Let $f_\theta(\cdot\mid x)$ denote the LLM’s conditional output distribution given a query $x$, and let
$\mathcal{C}(x,\hat{y})\in\{0,1\}$ be a deterministic correctness function that indicates whether a sampled response $\hat{y}$ is correct for $x$.
The \emph{expected accuracy} (our capability-calibration target) is
\[
\mu(x,f_\theta)\triangleq \mathbb{P}_{\hat{y}\sim f_\theta(\cdot\mid x)}[\mathcal{C}(x,\hat{y})=1].
\]
Since $\mu(x,f_\theta)$ is not directly observable, we approximate it by drawing $k_\mathrm{eval}$ independent samples
$\hat{y}_1,\ldots,\hat{y}_{k_\mathrm{eval}}\sim f_\theta(\cdot\mid x)$ and computing the empirical mean
\[
\hat{\mu}\;\triangleq\;\frac{1}{k_\mathrm{eval}}\sum_{j=1}^{k_\mathrm{eval}} \mathcal{C}(x,\hat{y}_j).
\]
For convenience, define the per-sample correctness indicator $Z_j \triangleq \mathcal{C}(x,\hat{y}_j)\in\{0,1\}$ and the number of correct samples
\[
c \;\triangleq\; \sum_{j=1}^{k_\mathrm{eval}} Z_j,
\qquad\text{so that}\qquad
\hat{\mu} = \frac{c}{k_\mathrm{eval}}.
\]

\textbf{Binomial model and estimation variance.}
Under independent sampling randomness across repeated generations and a deterministic evaluator $\mathcal{C}$, we have
\[
Z_j \mid x \sim \mathrm{Bernoulli}(\mu),
\qquad
c \sim \mathrm{Binomial}(k_\mathrm{eval},\mu).
\]
Consequently, $\hat{\mu}$ is an unbiased estimator of $\mu$ with
\begin{equation}
\mathbb{E}[\hat{\mu}] = \mu,
\qquad
\mathrm{Var}(\hat{\mu}) = \frac{\mu(1-\mu)}{k_\mathrm{eval}}.
\label{eq:mu_hat_var}
\end{equation}
Notably, this binomial structure is induced by the binary correctness indicator and does not assume any particular parametric form for the LLM’s raw text distribution.

\textbf{Sample-size guidance (normal/Wald approximation).}
Equation~\eqref{eq:mu_hat_var} implies that the standard error of $\hat{\mu}$ decays as $O(k_\mathrm{eval}^{-1/2})$.
A common rule-of-thumb for selecting $k_\mathrm{eval}$ is to control the margin of error of $\hat{\mu}$.
Using the asymptotic normal (Wald) approximation,
\[
\hat{\mu}\approx \mathcal{N}\!\left(\mu,\frac{\mu(1-\mu)}{k_\mathrm{eval}}\right),
\]
a two-sided $95\%$ confidence interval has approximate half-width (margin of error, \textsc{MoE})
\[
\mathrm{MoE} \approx z_{0.975}\sqrt{\frac{\mu(1-\mu)}{k_\mathrm{eval}}},\qquad z_{0.975}=1.96.
\]
We interpret $\epsilon$ as a target absolute accuracy for per-query estimation: we aim for the $95\%$ confidence interval of $\hat{\mu}$ to have half-width at most $\epsilon$, i.e., $|\hat{\mu}-\mu|\le \epsilon$ with approximately $95\%$ confidence under the normal approximation.
Thus, achieving $\mathrm{MoE}\le \epsilon$ suggests
\begin{equation}
k_\mathrm{eval}\ \gtrsim\ \frac{z_{0.975}^2\,\mu(1-\mu)}{\epsilon^2}.
\label{eq:wald_ksize}
\end{equation}
If $\mu$ is unknown, the conservative worst case uses $\mu(1-\mu)\le 1/4$ (attained at $\mu=0.5$), yielding
\begin{equation}
k_\mathrm{eval}\ \gtrsim\ \frac{z_{0.975}^2}{4\epsilon^2}.
\label{eq:wald_worstcase}
\end{equation}

\textbf{Practical considerations and our choice of $k_\mathrm{eval}$.}
While Equation \eqref{eq:wald_ksize}--\eqref{eq:wald_worstcase} are useful for intuition, Wald intervals can under-cover when $\mu$ is close to $0$ or $1$ and/or $k_\mathrm{eval}$ is small, and may produce bounds outside $[0,1]$.
We therefore use the Wald analysis only as a guideline for the scaling behavior in Equation \eqref{eq:mu_hat_var}, and complement it with a sensitivity analysis over $k_\mathrm{eval}$.
Prior work \citep{zhang2025beyondsingular} uses $k_\mathrm{eval}=50$ as the ground truth. We find that $k_\mathrm{eval}=100$ provides a stable per-query estimate of $\mu$ while keeping evaluation cost tractable; increasing $k_\mathrm{eval}$ beyond this point yields diminishing returns relative to the additional sampling cost.
For completeness, one may alternatively adopt Wilson score intervals (which typically provide closer-to-nominal coverage under the same Bernoulli sampling assumptions) to select $k_\mathrm{eval}$ via a short numerical search; our empirical validation supports that $k_\mathrm{eval}=100$ is a reliable operating point in our setting.

\paragraph{Empirical results.}
We show representative empirical results of gpt-oss-20b~\cite{agarwal2025gpt} on AIME25 in Figure~\ref{fig:k_eval}.
We found that the standard error of expected accuracies decreases to about 0.0056 when $k_\mathrm{eval}=100$.
Although larger $k_\mathrm{eval}$ further decreases variance, it shows diminishing benefits.
We finally chose $k_\mathrm{eval}=100$ to balance cost and reliability.

\begin{figure*}[t!]
\centering
\includegraphics[width=\textwidth]{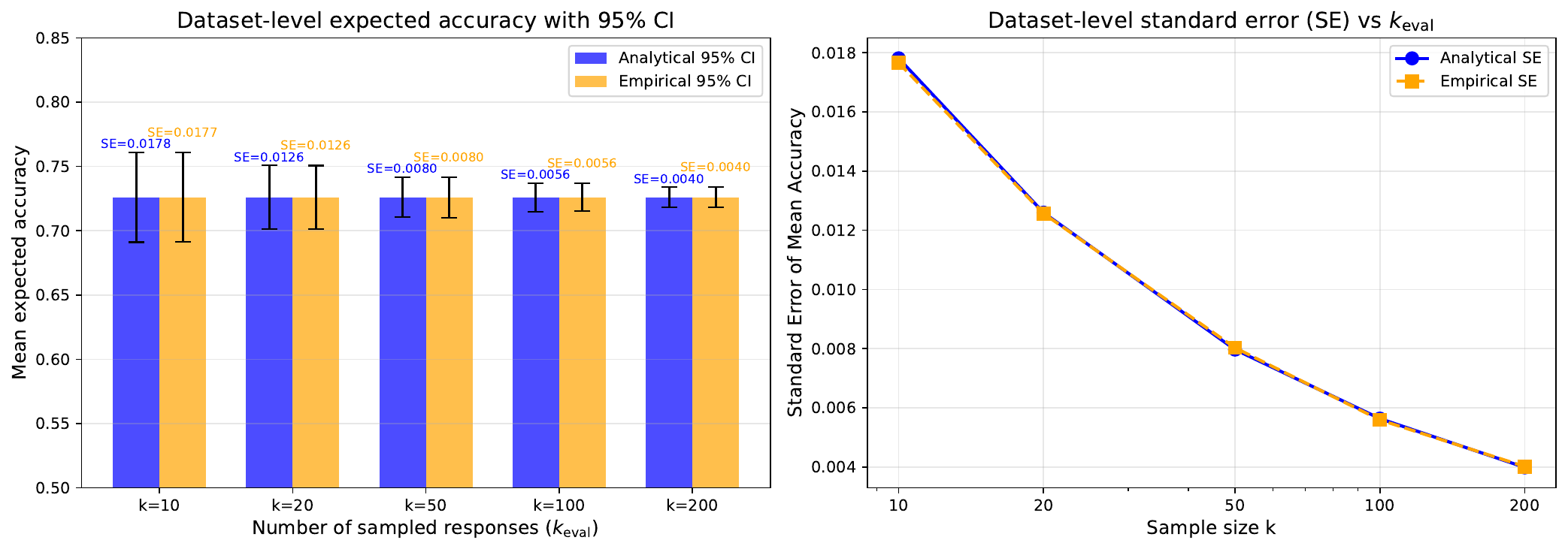}
    \caption{
        \textbf{Representative empirical results showing standard error of expected accuracies obtained by different $k_\mathrm{eval}$.}
    }
    \label{fig:k_eval}
\end{figure*}

\subsection{Sampling hyperparameters for each LLM.} \label{sec: sampling hyperparameters}
For each LLM used in the experiments, we use the sampling hyperparameters recommended by its model developers.
For Olmo-3-7B-Instruct~\cite{olmo2025olmo}, we use temperature=0.6 and top-p=0.95.
For Qwen3-8B~\cite{yang2025qwen3}, we use chat template of non-reasoning mode to save inference cost due to limited computational budget, and adopt temperature=0.7 and top-p=0.8.
For gpt-oss-20b~\cite{agarwal2025gpt}, we use temperature=1.0 and top-p=1.0.

\subsection{LLMs' mean expected accuracies across datasets}\label{sec: llm mean expected acc}
\begin{table*}[ht!]
    \centering
    \small
    \caption{Three LLMs' mean expected accuracies in seven datasets.}
    \label{tab:model_comparison}
    \begin{tabularx}{\textwidth}{@{} l Y Y Y Y Y Y Y @{}}
        \toprule
        \textbf{Model / Dataset} & 
        \textbf{TriviaQA} & 
        \textbf{SimpleQA} & 
        \textbf{GSM8K} &
        \textbf{MATH-500} &
        \textbf{AIME25} &
        \textbf{MMLU} &
        \textbf{GPQA} \\
        \midrule
        
        \textbf{Olmo-3-7B-Instruct} & 
        53.97\% & 
        3.57\% & 
        93.28\% &
        89.63\% &
        42.33\% &
        70.53\% &
        43.51\% \\
        
        \textbf{Qwen3-8B} & 
        63.01\% & 
        5.17\% & 
        93.27\% &
        83.62\% &
        20.87\% &
        79.37\% &
        49.70\% \\
        
        \textbf{gpt-oss-20b} & 
        63.55\% & 
        5.89\% & 
        95.70\% &
        93.38\% &
        73.57\% &
        89.42\% &
        66.61\% \\
        
        \bottomrule
    \end{tabularx}
\end{table*}

\clearpage

\section{Details of Confidence Estimation Methods}\label{sec:confidence_estimation}

\subsection{Detailed descriptions of existing methods}\label{sec:detailed_related_works}
In this section, we discuss the details of existing response calibration confidence estimators and LLM uncertainty quantification methods.

\paragraph{Response calibration confidence estimators.} Training-free methods: verbalized confidence \citep{lin2022teaching,tian2023just} that prompts the model to state its certainty, and token probability methods \citep{kadavath2022language, manakul2023selfcheckgpt}.
Specifically, $P(\text{True})$ \citep{kadavath2022language} estimates confidence by measuring the probability assigned to confirmation tokens (e.g., "True"). 
These methods estimate confidence after the output is generated.

\paragraph{LLM Uncertainty Quantification  (UQ) methods.} We categorize LLM UQ methods into three kinds: token-based approaches, which derive confidence from token-level likelihoods or log probabilities \citep{kadavath2022language, duan2024shifting}; Sampling-based approaches, which analyze the consistency or entropy of multiple generated outputs \citep{wang2022self, kuhn2023semantic, cecere2025monte}; and methods leveraging the models' internal signals. This includes training models to explicitly output an "I Don't Know" token \citep{cohen2024don}, probing hidden states, or estimating query-level uncertainty via internal self-evaluation \citep{chen2025query}.

\subsection{Implementation details of confidence estimators} \label{sec: implmentation of confidence estimators}
\paragraph{Random baseline.} We derive the analytical expected loss for the uniform random baseline as follows:
\begin{align*}
\mathbb{E}_{s}\left[(s - \hat{\mu}_i)^2\right] &= \int_{0}^{1} (s^2 - 2s\hat{\mu}_i + \hat{\mu}_i^2) \, ds, \\
&= \left[ \frac{1}{3}s^3 - s^2\hat{\mu}_i + s\hat{\mu}_i^2, \right]_0^1 \\
&= \frac{1}{3} - \hat{\mu}_i + \hat{\mu}_i^2.
\end{align*}

\paragraph{Verbalized confidence and P(True).} The prompt for verbalized confidence is provided in \cref{prompt: verbalized confidence}, and the prompt for P(True) is provided in \cref{prompt: ptrue}.

\begin{figure*}[ht]
\centering
\small
\begin{AIbox}{Verbalized confidence prompt}
Question: \{question\} \\

    How likely are you to answer the question correctly? You may refer to the following probabilities P: \\ 
    - 0.0-0.1: "Almost no chance" \\
    - 0.1-0.2: "Highly unlikely" \\
    - 0.2-0.3: "Chances are slight" \\
    - 0.3-0.4: "Unlikely" \\
    - 0.4-0.5: "Less than even" \\
    - 0.5-0.6: "Better than even" \\
    - 0.6-0.7: "Likely" \\
    - 0.7-0.8: "Very good chance" \\
    - 0.8-0.9: "Highly likely" \\
    - 0.9-1.0: "Almost certain" \\ 
    \mbox{Reason about your uncertainty and confidence, and then provide a probability P between 0.0 and 1.0 in the format of $\backslash$boxed\{P\}.}
\end{AIbox}
\caption{The prompt for verbalized confidence.}
\label{prompt: verbalized confidence}
\end{figure*}

\paragraph{Training linear probes on LLM's hidden states.}
\label{app:probing}

We used layer activations after the initial embedding layer and after each transformer block, so there are $\ell + 1$ layers of activations used, with $\ell$ being the number of transformer blocks in the LLM.
In our preliminary experiments, we trained a linear probe on activations of each layer, as well as on max-pooled or mean-pooled activations.
We found that mean-pooled activations performed the best on the validation sets of each dataset, so we train linear probes on the mean-pooled activations in our main experiments (results listed in \S\ref{sec:results}).
We also searched for other hyperparameters, including number of epochs = $\{100,200,500,1000\}$, batch sizes = $\{32,64,128,256\}$, weight decays = $\{0.1,0.01,0.001\}$, loss functions = $\{\mathrm{BCE}, \mathrm{MSE}\}$, input feature standardization = $\{\mathrm{False},\mathrm{True}\}$, and learning rates = $\{\mathrm{1e-2,5e-3,2e-3,1e-3,5e-4,2e-4,1e-4,5e-5,2e-5,1e-5,5e-6,2e-6,1e-6}\}$ on the validation sets.
The final chosen hyperparameters are listed in Table~\ref{tab:hyperparameters}.
We also tried training 2-layer MLP probes, but found that their performance did not differ from linear probes.

\paragraph{Datasets for training linear probes.}
For TriviaQA~\cite{joshi2017triviaqa} and GSM8K~\cite{cobbe2021training}, we use their training sets.
For MATH~\cite{hendrycksmath2021} ($N = 12,500$), we use MATH-500~\cite{lightman2023let} as the test set, and the remaining 12,000 instances as the training and validation sets.

\begin{figure*}[ht]
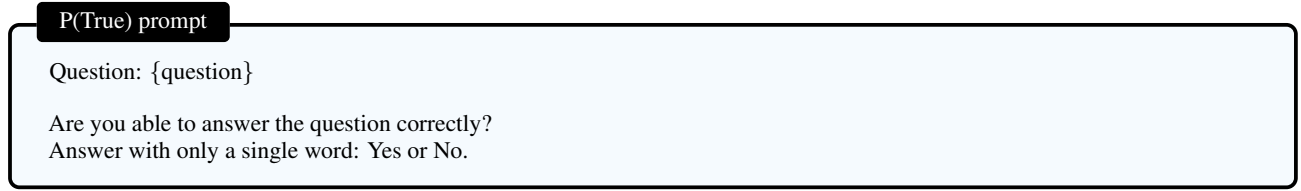

\centering
\small
\begin{AIbox}{P(True) prompt}
Question: \{question\} \\ 

Are you able to answer the question correctly? \\
    Answer with only a single word: Yes or No.
\end{AIbox}
\caption{The prompt for P(True).}
\label{prompt: ptrue}
\end{figure*}

\begin{table*}[ht!]
    \centering
    \small
    \caption{Hyperparameters or model information for linear probes trained on three LLMs.}
    \label{tab:hyperparameters}
    \begin{tabularx}{\textwidth}{@{} l Y Y Y @{}}
        \toprule
        \textbf{Hyperparameter or Model Information} & 
        \textbf{Olmo-3-7B-Instruct} & 
        \textbf{Qwen-3B} & 
        \textbf{gpt-oss-20b} \\
        \midrule
        
        Number of layer activations used & 
        33 & 
        37 & 
        25 \\
        
        Hidden dimension & 
        4096 & 
        4096 & 
        2880 \\
        
        Epochs & 
        100 & 
        100 & 
        100 \\
        
        Batch size & 
        32 & 
        32 & 
        32 \\
        
        Weight decay & 
        0.01 & 
        0.01 & 
        0.01 \\
        
        Pooling method & 
        Mean pooling & 
        Mean pooling & 
        Mean pooling \\
        
        Loss function & 
        BCE loss & 
        BCE loss & 
        BCE loss \\
        
        Feature standardization & 
        False & 
        False & 
        True \\
        
        Learning rate (TriviaQA) & 
        $5 \times 10^{-3}$ & 
        $2 \times 10^{-4}$ & 
        $2 \times 10^{-4}$ \\
        
        Learning rate (GSM8K) & 
        $5 \times 10^{-3}$ & 
        $1 \times 10^{-4}$ & 
        $5 \times 10^{-4}$ \\
        
        Learning rate (MATH) & 
        $5 \times 10^{-3}$ & 
        $1 \times 10^{-4}$ & 
        $2 \times 10^{-4}$ \\
        
        \bottomrule
    \end{tabularx}
\end{table*}

\clearpage

\subsection{Analysis of $k_c$ for Response Consistency method} \label{sec: response consistency additional results}
In this section, we analyze how the sample size $k_c$ impacts the \textbf{Response Consistency} confidence estimator. As shown in \cref{tab: consistency abl}, a larger $k_c$ generally leads to improved Brier scores by reducing estimation variance. However, this performance gain trades off with a higher estimation cost. We observe performance saturation because increasing $k_c$ cannot correct for fundamental miscalibration, which is the main limitation of this confidence estimator. If the model's most frequent response is incorrect, the estimator converges to a confidence score for a failure case, preventing further reduction in Brier score.
\begin{table*}[h]
\centering
\small
\caption{\textbf{Capability calibration Brier scores of Response Consistency across different numbers of samples} $k_c$. We use \textbf{bold} to denote the \textbf{best} calibrated method, and \underline{underline} to denote the \underline{second best}. Larger $k_c$ generally improves performance at the cost of higher estimation overhead.}
\label{tab: consistency abl}
\begin{tabularx}{\textwidth}{l c | Y Y Y Y Y Y Y}
\toprule
& \textbf{Domain} & \multicolumn{2}{c}{\textbf{Factual knowledge}} & \multicolumn{3}{c}{\textbf{Mathematical reasoning}} & \multicolumn{2}{c}{\textbf{General exams}} \\
\cmidrule(lr){3-4} \cmidrule(lr){5-7} \cmidrule(lr){8-9}
\textbf{Method} & \textbf{Cost} & \textbf{TriviaQA} & \textbf{SimpleQA} & \textbf{GSM8K} & \textbf{\mbox{MATH}} & \textbf{AIME25} & \textbf{MMLU} & \textbf{GPQA} \\
\midrule
\multicolumn{2}{l|}{\textbf{Olmo-3-7B-Instruct}} & \multicolumn{7}{c}{} \\
\hspace{3mm}Consistency ($k=5$) & 5$L$ & 0.1228 & 0.1605 & 0.0290 & \underline{0.0373} & 0.1860 & 0.1306 & 0.2462 \\
\hspace{3mm}Consistency ($k=10$) & 10$L$ & \underline{0.1121} & \underline{0.1297} & \underline{0.0272} & 0.0383 & \underline{0.1730} & \underline{0.1191} & \underline{0.2246} \\
\hspace{3mm}Consistency ($k=20$) & 20$L$ & \textbf{0.1046} & \textbf{0.1141} & \textbf{0.0264} & \textbf{0.0350} & \textbf{0.1465} & \textbf{0.1121} & \textbf{0.2079} \\
\multicolumn{2}{l|}{\textbf{Qwen3-8B}} & \multicolumn{7}{c}{} \\
\hspace{3mm}Consistency ($k=5$) & 5$L$ & 0.1433 & 0.2016 & 0.0255 & 0.0336 & 0.1630 & 0.1039 & 0.1830 \\
\hspace{3mm}Consistency ($k=10$) & 10$L$ & \underline{0.1301} & \underline{0.1657} & \underline{0.0241} & \underline{0.0304} & \underline{0.1433} & \underline{0.0976} & \underline{0.1656} \\
\hspace{3mm}Consistency ($k=20$) & 20$L$ & \textbf{0.1239} & \textbf{0.1521} & \textbf{0.0237} & \textbf{0.0276} & \textbf{0.1040} & \textbf{0.0959} & \textbf{0.1485} \\
\multicolumn{2}{l|}{\textbf{gpt-oss-20b}} & \multicolumn{7}{c}{} \\
\hspace{3mm}Consistency ($k=5$) & 5$L$ & \textbf{0.1274} & 0.0912 & 0.0210 & \textbf{0.0553} & 0.0515 & 0.0504 & 0.1275 \\
\hspace{3mm}Consistency ($k=10$) & 10$L$ & \underline{0.1382} & \underline{0.0624} & \underline{0.0202} & \underline{0.0554} & \underline{0.0452} & \underline{0.0469} & \underline{0.1066} \\
\hspace{3mm}Consistency ($k=20$) & 20$L$ & 0.1406 & \textbf{0.0520} & \textbf{0.0188} & 0.0593 & \textbf{0.0386} & \textbf{0.0453} & \textbf{0.1007} \\
\bottomrule
\end{tabularx}
\end{table*}

\begin{table*}[ht!]
\centering
\small
\caption{\textbf{Capability calibration Brier scores of linear probes trained on single and mixed datasets.} Results reported by Brier scores (↓). For linear probes trained with different datasets, we use different colors to indicate \colorbox{LGreen!25}{in-domain in-distribution}, \colorbox{LBlue!25}{in-domain out-of-distribution}, and \colorbox{LOrange!25}{out-domain} performance. We use \textbf{bold} to denote the \textbf{best} calibrated method, and \underline{underline} to denote the \underline{second best}.
Results show that training probes on a mixture of datasets (TriviaQA + GSM8K) generally yields the most robust calibration across both in-distribution and in-domain OOD tasks (e.g., MATH, SimpleQA). However, this benefit is less consistent for out-domain datasets (MMLU, GPQA), where specialized single-dataset probes occasionally maintain an edge.}
\label{tab: mix train data}
\begin{tabularx}{\textwidth}{l c | Y Y Y Y Y Y Y}
\toprule
& \textbf{Domain} & \multicolumn{2}{c}{\textbf{Factual knowledge}} & \multicolumn{3}{c}{\textbf{Mathematical reasoning}} & \multicolumn{2}{c}{\textbf{General exams}} \\
\cmidrule(lr){3-4} \cmidrule(lr){5-7} \cmidrule(lr){8-9}
\textbf{Method} & \textbf{Cost} & \textbf{TriviaQA} & \textbf{SimpleQA} & \textbf{GSM8K} & \textbf{\mbox{MATH}} & \textbf{AIME25} & \textbf{MMLU} & \textbf{GPQA} \\
\midrule
\multicolumn{2}{l|}{\textbf{Olmo-3-7B-Instruct}} & \multicolumn{7}{c}{} \\
\hspace{3mm}Probing (TriviaQA) & $<1$ & \cellcolor{LGreen!25}\textbf{0.1113} & \cellcolor{LBlue!25}\textbf{0.0386} & \cellcolor{LOrange!25}0.1180 & \cellcolor{LOrange!25}\underline{0.1273} & \cellcolor{LOrange!25}\textbf{0.1496} & \cellcolor{LOrange!25}0.1300 & \cellcolor{LOrange!25}\underline{0.1242} \\
\hspace{3mm}Probing (GSM8K) & $<1$ & \cellcolor{LOrange!25}0.2648 & \cellcolor{LOrange!25}0.5465 & \cellcolor{LGreen!25}\textbf{0.0370} & \cellcolor{LBlue!25}\textbf{0.0545} & \cellcolor{LBlue!25}0.2482 & \cellcolor{LOrange!25}\textbf{0.1200} & \cellcolor{LOrange!25}0.1628 \\
\hspace{3mm}Probing (TriviaQA + GSM8K) & $<1$ & \cellcolor{LGreen!25}\underline{0.1147} & \cellcolor{LBlue!25}\underline{0.0396} & \cellcolor{LGreen!25}\underline{0.0371} & \cellcolor{LBlue!25}\textbf{0.0545} & \cellcolor{LBlue!25}\underline{0.1622} & \cellcolor{LOrange!25}\underline{0.1298} & \cellcolor{LOrange!25}\textbf{0.1141} \\
\midrule
\multicolumn{2}{l|}{\textbf{Qwen3-8B}} & \multicolumn{7}{c}{} \\
\hspace{3mm}Probing (TriviaQA) & $<1$ & \cellcolor{LGreen!25}\textbf{0.1079} & \cellcolor{LBlue!25}\underline{0.0638} & \cellcolor{LOrange!25}0.3177 & \cellcolor{LOrange!25}0.3219 & \cellcolor{LOrange!25}0.1286 & \cellcolor{LOrange!25}0.2006 & \cellcolor{LOrange!25}\underline{0.1556} \\
\hspace{3mm}Probing (GSM8K) & $<1$ & \cellcolor{LOrange!25}\underline{0.1885} & \cellcolor{LOrange!25}0.4451 & \cellcolor{LGreen!25}\textbf{0.0368} & \cellcolor{LBlue!25}\textbf{0.0715} & \cellcolor{LBlue!25}\underline{0.0740} & \cellcolor{LOrange!25}\textbf{0.1176} & \cellcolor{LOrange!25}\textbf{0.1556} \\
\hspace{3mm}Probing (TriviaQA + GSM8K) & $<1$ & \cellcolor{LGreen!25}\textbf{0.1079} & \cellcolor{LBlue!25}\textbf{0.0607} & \cellcolor{LGreen!25}\underline{0.0408} & \cellcolor{LBlue!25}\underline{0.1378} & \cellcolor{LBlue!25}\textbf{0.0640} & \cellcolor{LOrange!25}\underline{0.1422} & \cellcolor{LOrange!25}\underline{0.1683} \\
\midrule
\multicolumn{2}{l|}{\textbf{gpt-oss-20b}} & \multicolumn{7}{c}{} \\
\hspace{3mm}Probing (TriviaQA) & $<1$ & \cellcolor{LGreen!25}\textbf{0.0845} & \cellcolor{LBlue!25}\textbf{0.0600} & \cellcolor{LOrange!25} 0.0780 & \cellcolor{LOrange!25}0.1593 & \cellcolor{LOrange!25}0.1457 & \cellcolor{LOrange!25}0.0977 & \cellcolor{LOrange!25}\underline{0.1533} \\
\hspace{3mm}Probing (GSM8K) & $<1$ & \cellcolor{LOrange!25}0.1756 & \cellcolor{LOrange!25}0.7048 & \cellcolor{LGreen!25}\underline{0.0289} & \cellcolor{LBlue!25}\textbf{0.0485} & \cellcolor{LBlue!25}\textbf{0.1213} & \cellcolor{LOrange!25}\textbf{0.0686} & \cellcolor{LOrange!25}0.2010 \\
\hspace{3mm}Probing (TriviaQA + GSM8K) & $<1$ & \cellcolor{LGreen!25}\underline{0.0902} & \cellcolor{LBlue!25}\underline{0.0707} & \cellcolor{LGreen!25} \textbf{0.0277} & \cellcolor{LBlue!25}\underline{0.0554} & \cellcolor{LBlue!25}\underline{0.1298} & \cellcolor{LOrange!25}\underline{0.0818} & \cellcolor{LOrange!25}\textbf{0.1256} \\
\bottomrule
\end{tabularx}
\end{table*}

\subsection{Mixing training dataset from different domains} \label{sec: mix of train data}
In this section, we discuss the effect of mixing training datasets from different domains. The experiment results are in \cref{tab: mix train data}. While single-dataset probes typically perform best on their specific in-distribution tasks, the mixed probe (TriviaQA + GSM8K) often achieves the best or second-best Brier scores across all in-domain out-of-distribution categories, such as MATH and SimpleQA. This suggests that data diversity improves the probe's ability to generalize to in-domain tasks. However, this trend does not hold in out-domain settings like MMLU and GPQA, where the performance of mixed versus single probes varies by model, indicating that mixing training data does not guarantee improved calibration for entirely unrelated domains.

\section{Details of the Applications} \label{sec:dec_applications} 

\subsection{Pass@$k$ simulation details}\label{sec: pass@k sim details}
\subsubsection{Process of simulating Pass@$k$ curve }  \label{sec: pass@k sim process}

\paragraph{Pass@$k$ score at each $k$.} Define the capability-calibrated confidence as $p$. For each instance $i$ that is sampled $k$ times, the success rate, i.e., pass it or not, follows a Bernoulli distribution. The Bernoulli distribution has mean $p$ and variance $p(1-p)$. Therefore, for each instance $i$ that samples $k$ times, the success rate is $P(\text{success@}k)_i = S_{i,k} = 1 - (1 - p_i)^k$, the variance is $S_{i,k}(1 - S_{i,k})$. Generalized to the dataset level. For a dataset $\mathcal{D}=\{x_i\}_{i=1}^N$, the pass@$k$ score's mean and variance are $\mu_k = \frac{1}{N} \sum_{i=1}^{N} S_{i,k}$ and $\text{std}_k^2 = \frac{1}{N^2} \sum_{i=1}^{N} S_{i,k}(1 - S_{i,k})$ respectively. Here, we assume that each instance is independent, so the covariance is zero. 

\paragraph{Drawing the simulation curve.} The mean and variance of the pass@$k$ score are estimated for each $k$ along the curve. Based on the Central Limit Theorem, the distribution of pass@$k$ is treated as a normal distribution, allowing for the calculation of a 95\% confidence interval. This interval represents the region where the true curve is expected to be.

\subsubsection{More simulation results} \label{sec: more passk simulation}
In this section, we discuss more pass@$k$ simulation results. First of all, the MSE of AIME25 simulation results are in \cref{tab:pass_at_k_aime}. The experiment results are consistent with the simulation results on MATH-500.

\cref{fig:passk_curve} presents the pass@$k$ simulation curve at the dataset level. While Oracle Capability-Calibrated Confidence (Oracle-CC) fits the actual pass@$k$ curve near perfectly, Probe-MATH does not fit well in most scenarios. The experiment results encourage future work on developing confidence estimators that can capture the model's capability on the dataset.  

\begin{table}[h]
\centering
\caption{\textbf{Pass@$k$ simulation error (MSE) on the AIME25 dataset.} The experiment findings are consistent with the MATH-500 simulation results at \cref{tab:pass_at_k_math}.}
\label{tab:pass_at_k_aime}
\begin{tabular}{p{2.8cm}cccc}
\toprule
\textbf{Method} & \textbf{pass@1} & \textbf{pass@4} & \textbf{pass@16} & \textbf{pass@64} \\ 
\midrule
\multicolumn{5}{l}{\textbf{Olmo-3-7B-Instruct}} \\
\hspace{3mm}Oracle RC & 0.0968 & 0.1596 & 0.2727 & 0.3721 \\
\hspace{3mm}Oracle CC & 0.0000 & 0.0000 & 0.0001 & 0.0015 \\
\hspace{3mm}Probe-MATH & 0.1411 & 0.2658 & 0.2426 & 0.2011 \\ 
\midrule
\multicolumn{5}{l}{\textbf{Qwen3-8B}} \\
\hspace{3mm}Oracle RC & 0.0444 & 0.0839 & 0.1882 & 0.2960 \\
\hspace{3mm}Oracle CC & 0.0000 & 0.0000 & 0.0001 & 0.0014 \\
\hspace{3mm}Probe-MATH & 0.0832 & 0.2977 & 0.4885 & 0.4711 \\ 
\midrule
\multicolumn{5}{l}{\textbf{gpt-oss-20b}} \\
\hspace{3mm}Oracle RC & 0.0809 & 0.1267 & 0.1599 & 0.1873 \\
\hspace{3mm}Oracle CC & 0.0000 & 0.0000 & 0.0000 & 0.0018 \\
\hspace{3mm}Probe-MATH & 0.1661 & 0.1136 & 0.0798 & 0.0420 \\ 
\bottomrule
\end{tabular}
\end{table}

\begin{figure}[ht!]
    \centering
    \newcommand{\figw}{0.48\textwidth}

    \begin{subfigure}{\figw}
        \includegraphics[width=\linewidth]{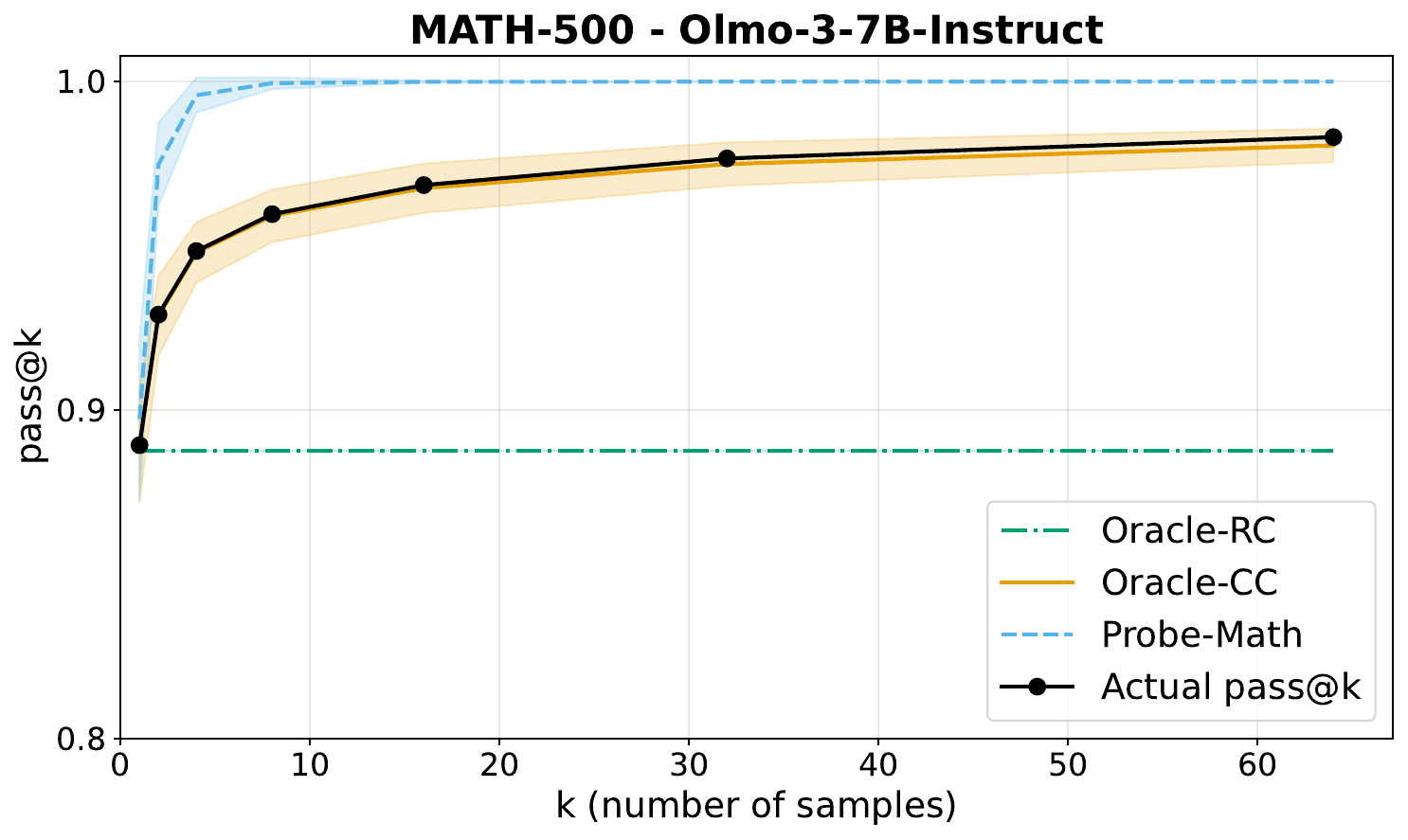}
    \end{subfigure}\hfill
    \begin{subfigure}{\figw}
        \includegraphics[width=\linewidth]{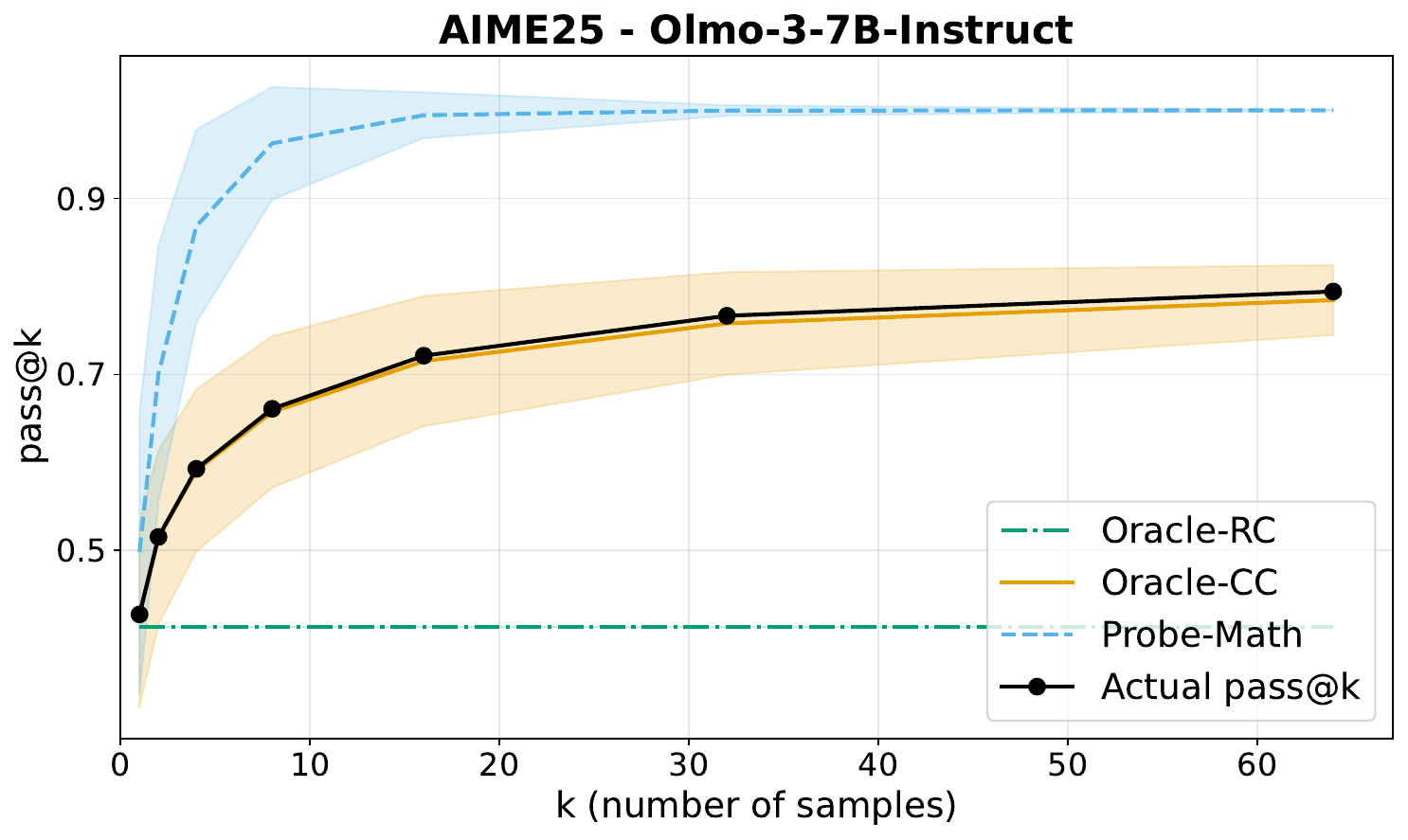}
    \end{subfigure}

    \vspace{0.3cm} 

    \begin{subfigure}{\figw}
        \includegraphics[width=\linewidth]{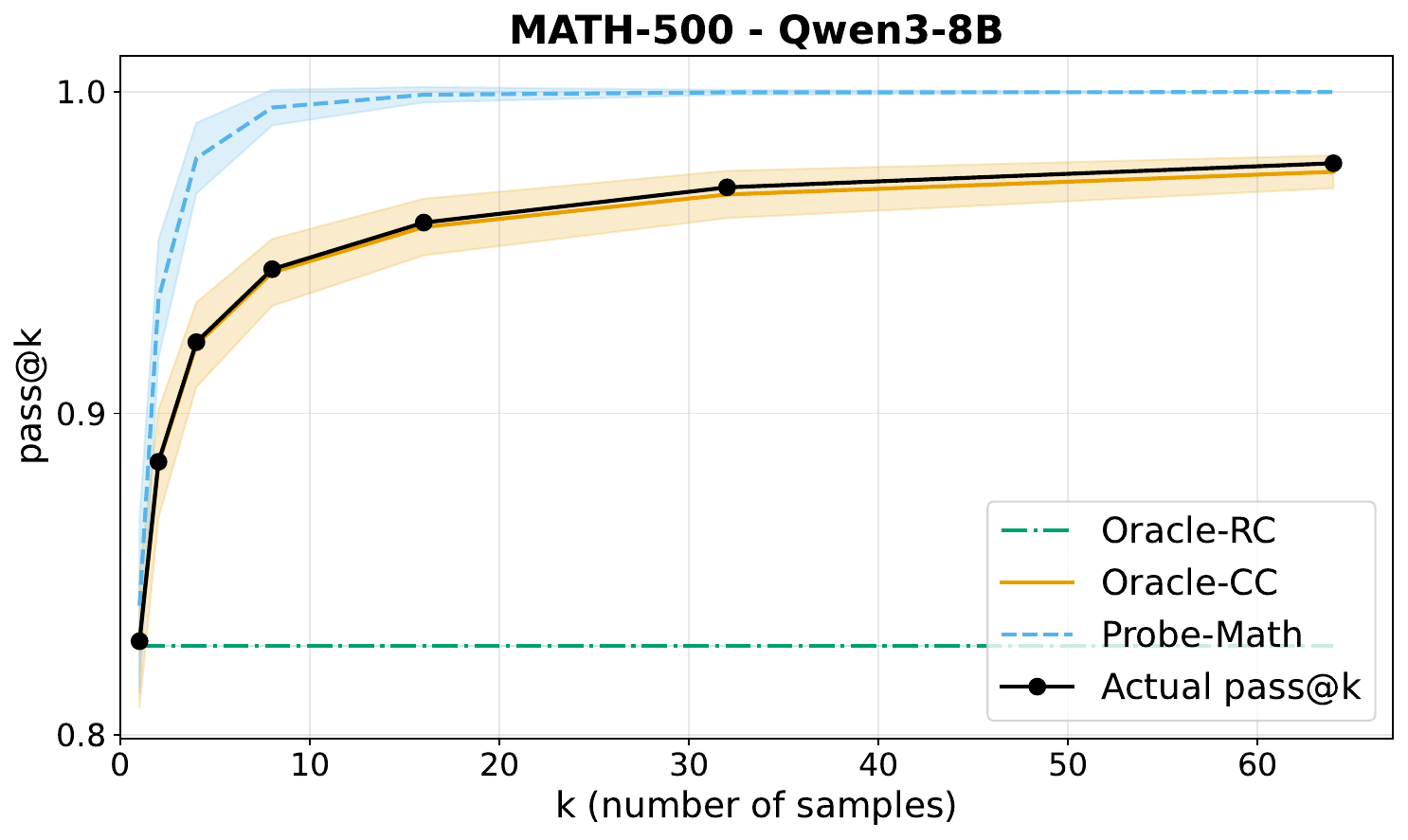}
    \end{subfigure}\hfill
    \begin{subfigure}{\figw}
        \includegraphics[width=\linewidth]{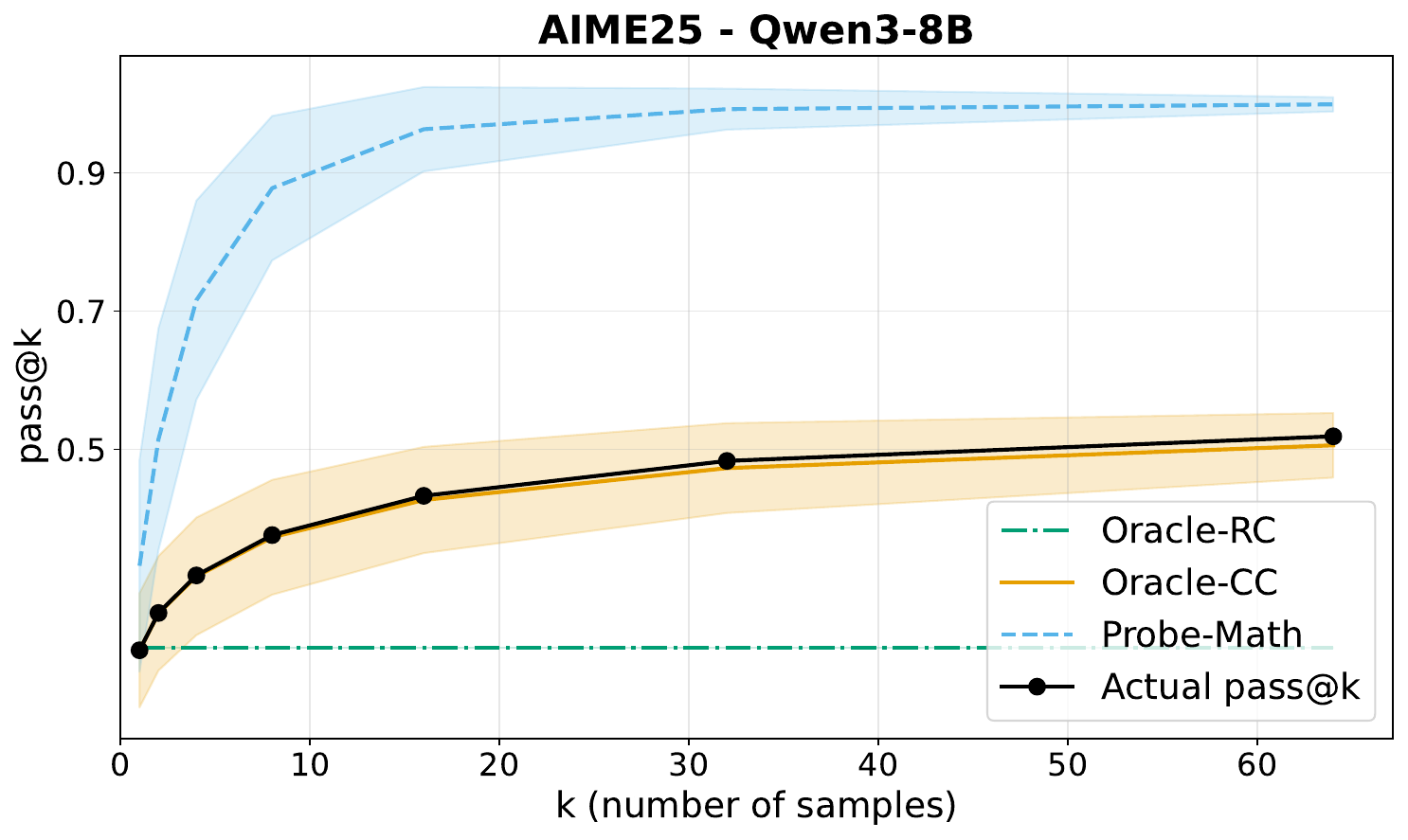}
    \end{subfigure}

    \vspace{0.3cm}

    \begin{subfigure}{\figw}
        \includegraphics[width=\linewidth]{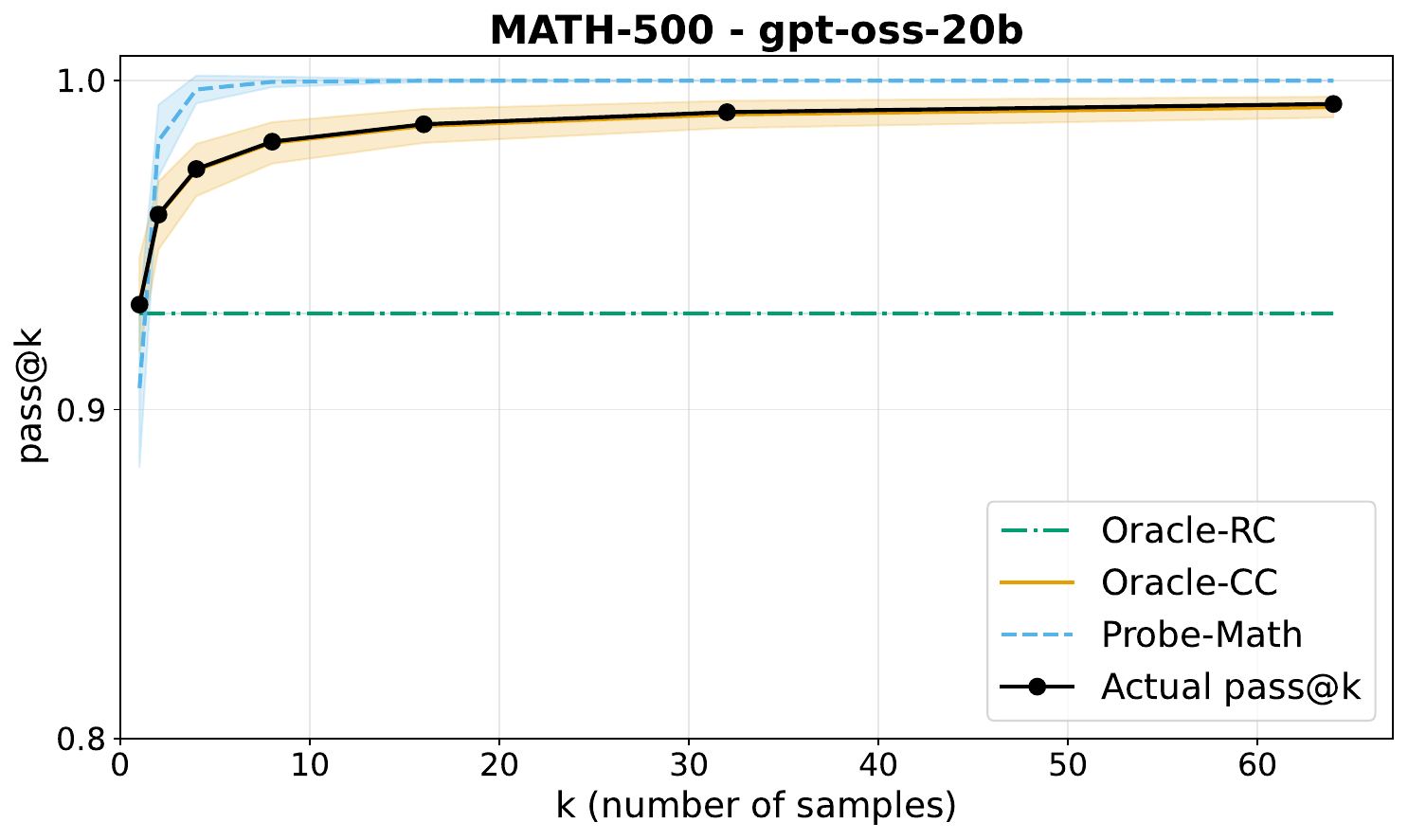}
    \end{subfigure}\hfill
    \begin{subfigure}{\figw}
        \includegraphics[width=\linewidth]{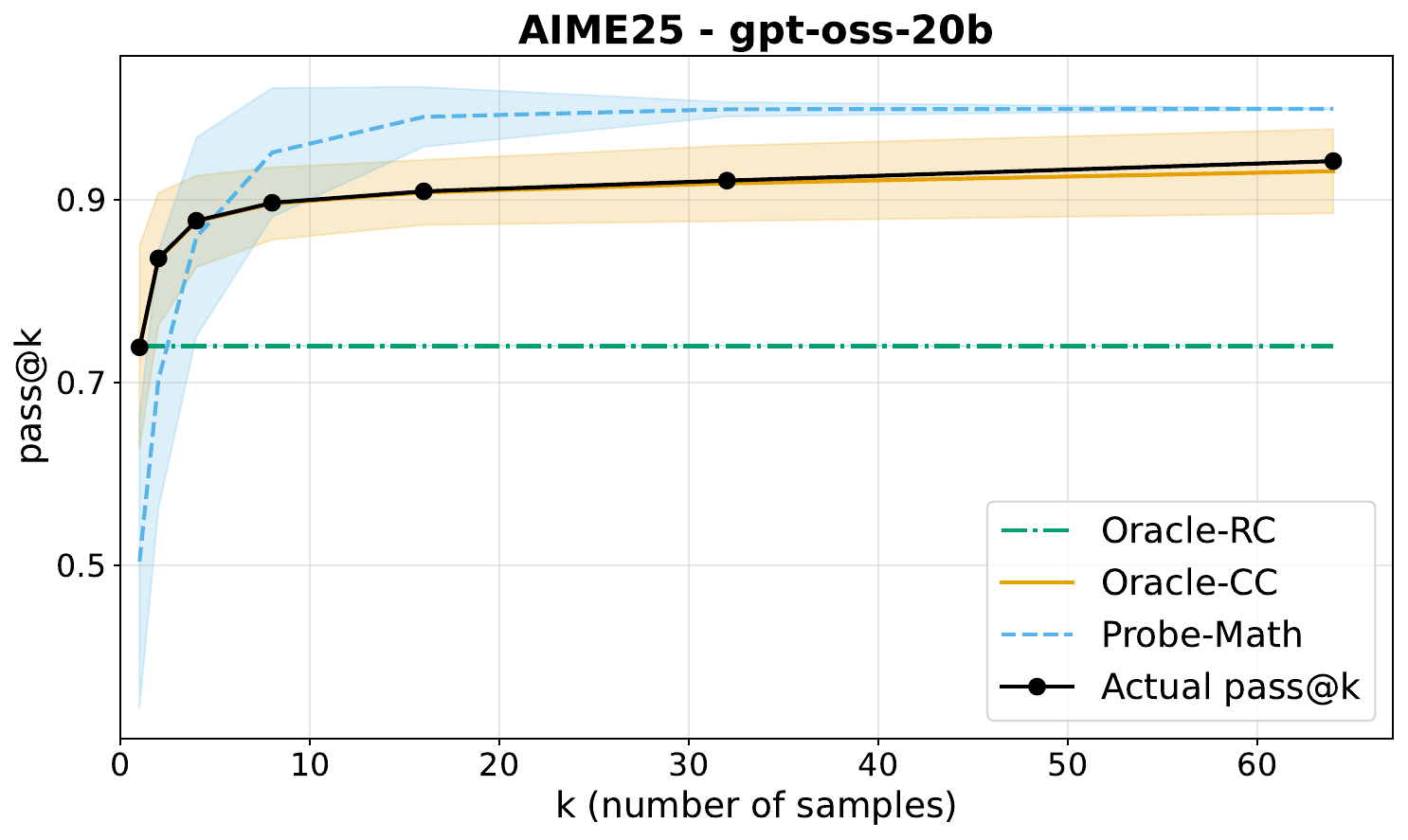}
    \end{subfigure}

    \caption{\textbf{Pass@$k$ simulation results across 3 models  on MATH-500 and AIME25 datasets.} While Oracle-CC can simulate the actual pass@$k$ near-perfectly, Oracle-RC performs poorly as it is not measuring the models' capabilities. However, Probe-MATH does not fit well in many scenarios.}
    \label{fig:passk_curve}
\end{figure}

\clearpage
\subsection{Inference budget allocation details}\label{sec:budget_allocation_details}
\subsubsection{Greedy algorithm for test-time compute allocation \citep{damani2024learning}} \label{sec: mit greedy algorithm}

To allocate the inference budget efficiently, we maximize the expected number of solved questions (best-of-$k$). Let $N$ be the number of questions, $N\times B$ be the total budget, and $p_i$ be the capability-calibrated confidence estimation for question $i$.

The total expected score $S$ is the sum of the probabilities that each question is solved at least once:
\begin{equation}
    S = \sum_{i=1}^{N} \left[ 1 - (1 - p_i)^{k_i} \right]
\end{equation}
where $k_i$ is the number of samples allocated to question $i$.

To optimize this, we analyze the marginal improvement of adding a single sample to question $i$, given that it has already been allocated $k_i$ samples:
\begin{equation}
\begin{split}
    \text{Gain}_i &= S(\text{with } k_i+1 \text{ samples}) - S(\text{with } k_i \text{ samples}) \\
    &= \left[ 1 - (1 - p_i)^{k_i+1} \right] - \left[ 1 - (1 - p_i)^{k_i} \right] \\
    &= (1 - p_i)^{k_i} - (1 - p_i)^{k_i+1} \\
    &= (1 - p_i)^{k_i} \left[ 1 - (1 - p_i) \right] \\
    &= p_i (1 - p_i)^{k_i}
\end{split}
\end{equation}
Because the gain function is strictly decreasing with respect to $k_i$, a greedy strategy that iteratively assigns the next budget unit to the question with the highest current $\text{Gain}_i$ results in better allocation.

\subsubsection{More experiment results} \label{sec: more budget_alloc results}
Full experiment results of inference budget allocation are available at \cref{fig:budget_full}. We observe that confidence estimators with lower Brier scores have better inference budget allocation performance.

\begin{figure}[ht!]
    \centering
    \newcommand{\figw}{0.48\textwidth}

    \begin{subfigure}{\figw}
        \includegraphics[width=\linewidth]{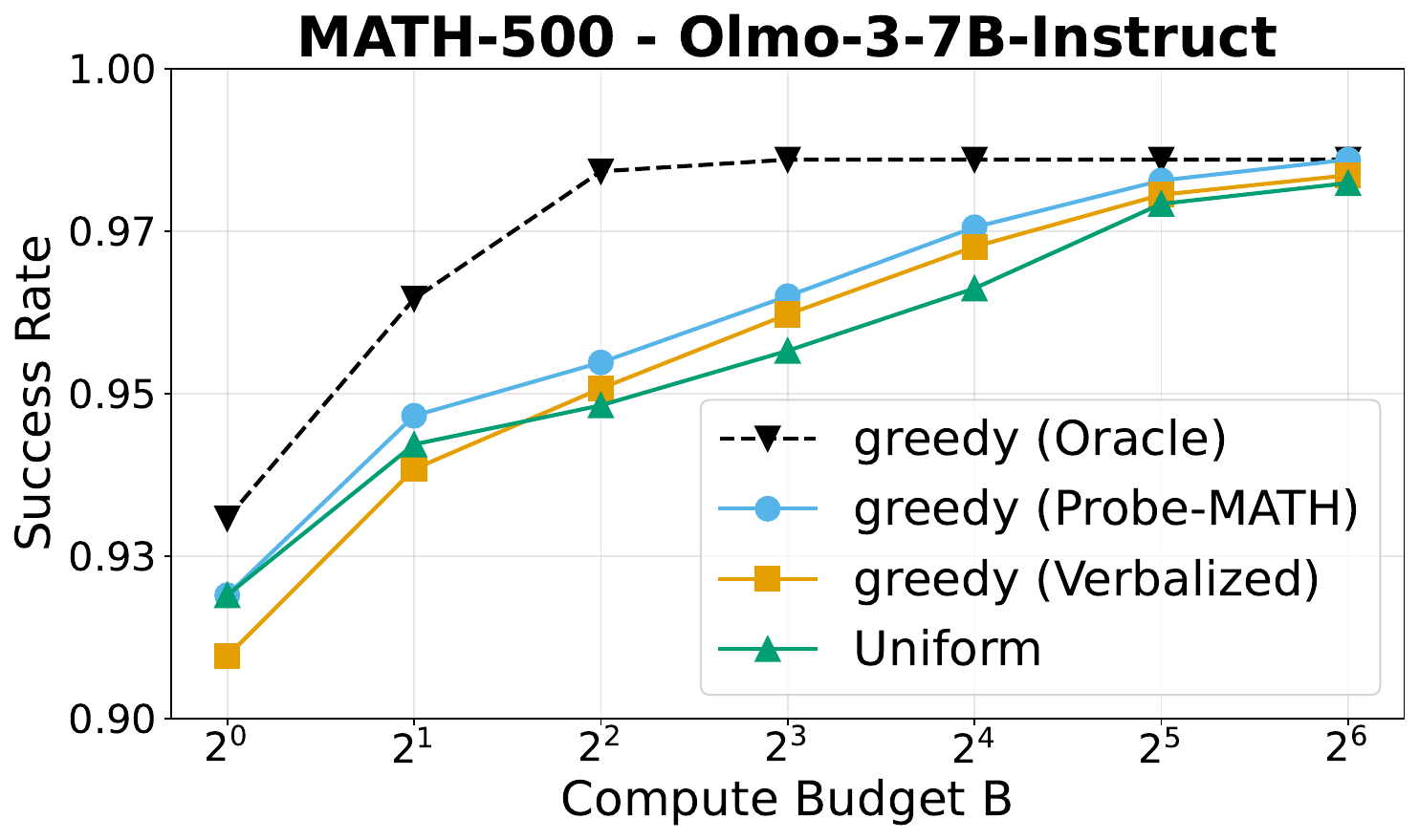}
    \end{subfigure}\hfill
    \begin{subfigure}{\figw}
        \includegraphics[width=\linewidth]{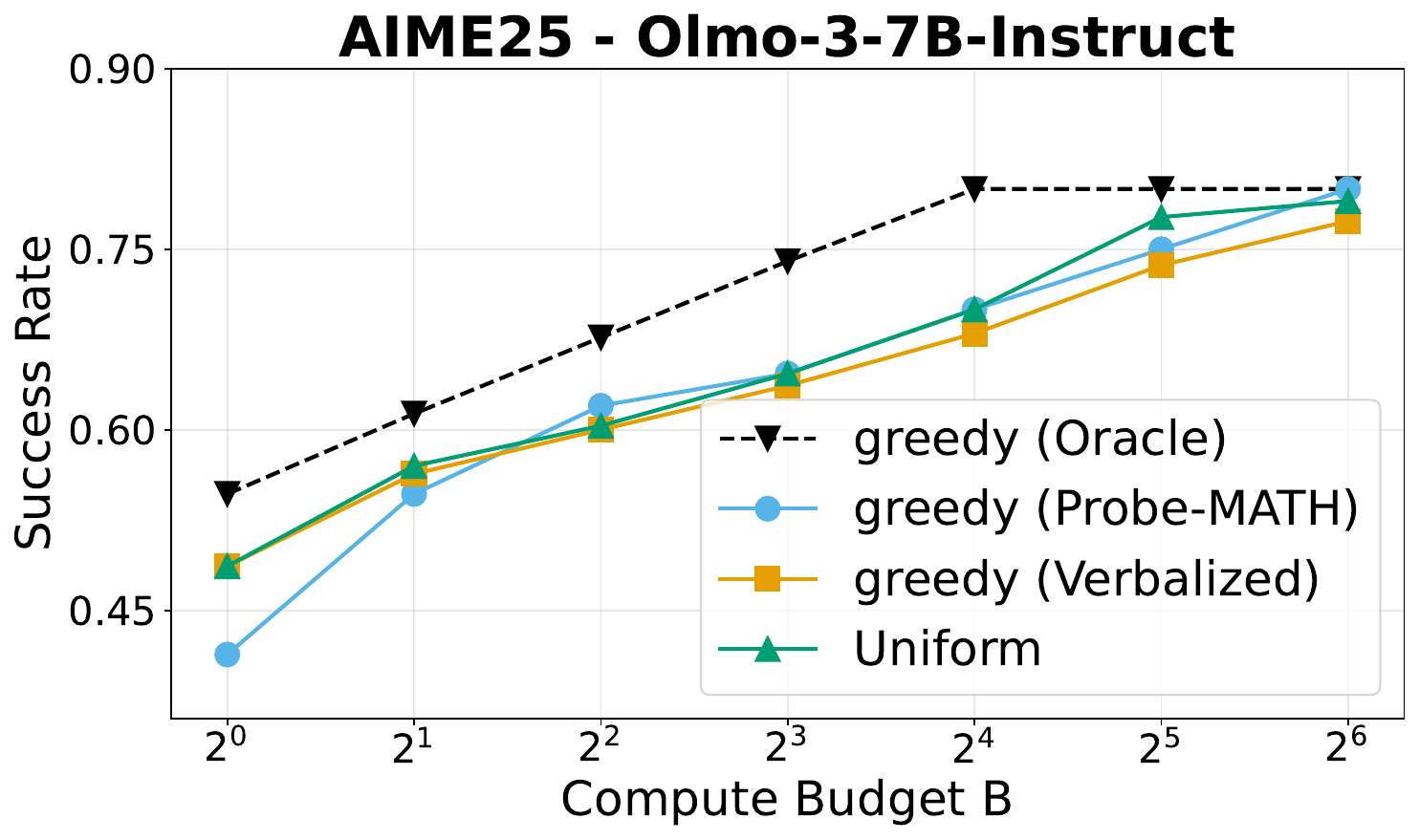}
    \end{subfigure}

    \vspace{0.3cm} 

    \begin{subfigure}{\figw}
        \includegraphics[width=\linewidth]{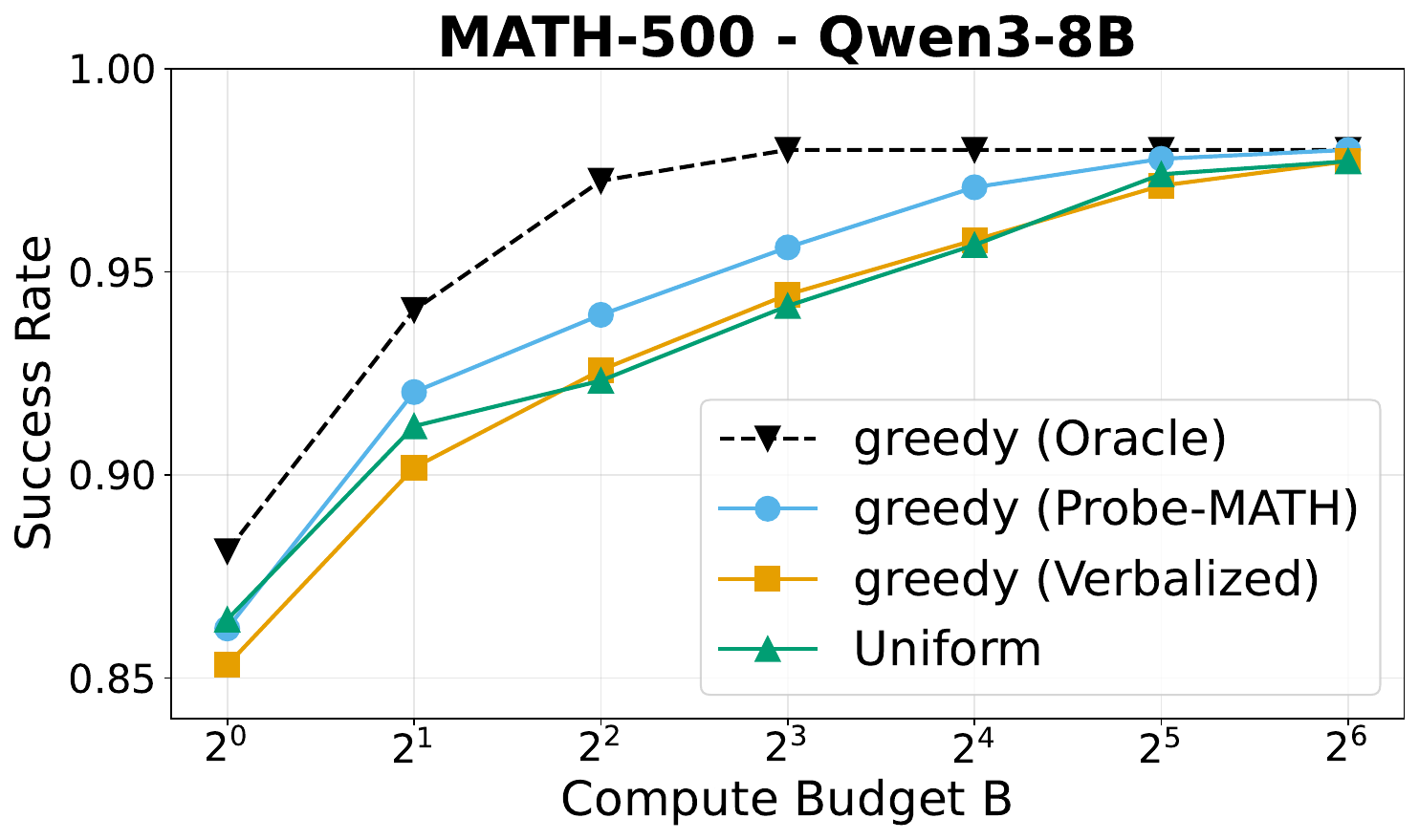}
    \end{subfigure}\hfill
    \begin{subfigure}{\figw}
        \includegraphics[width=\linewidth]{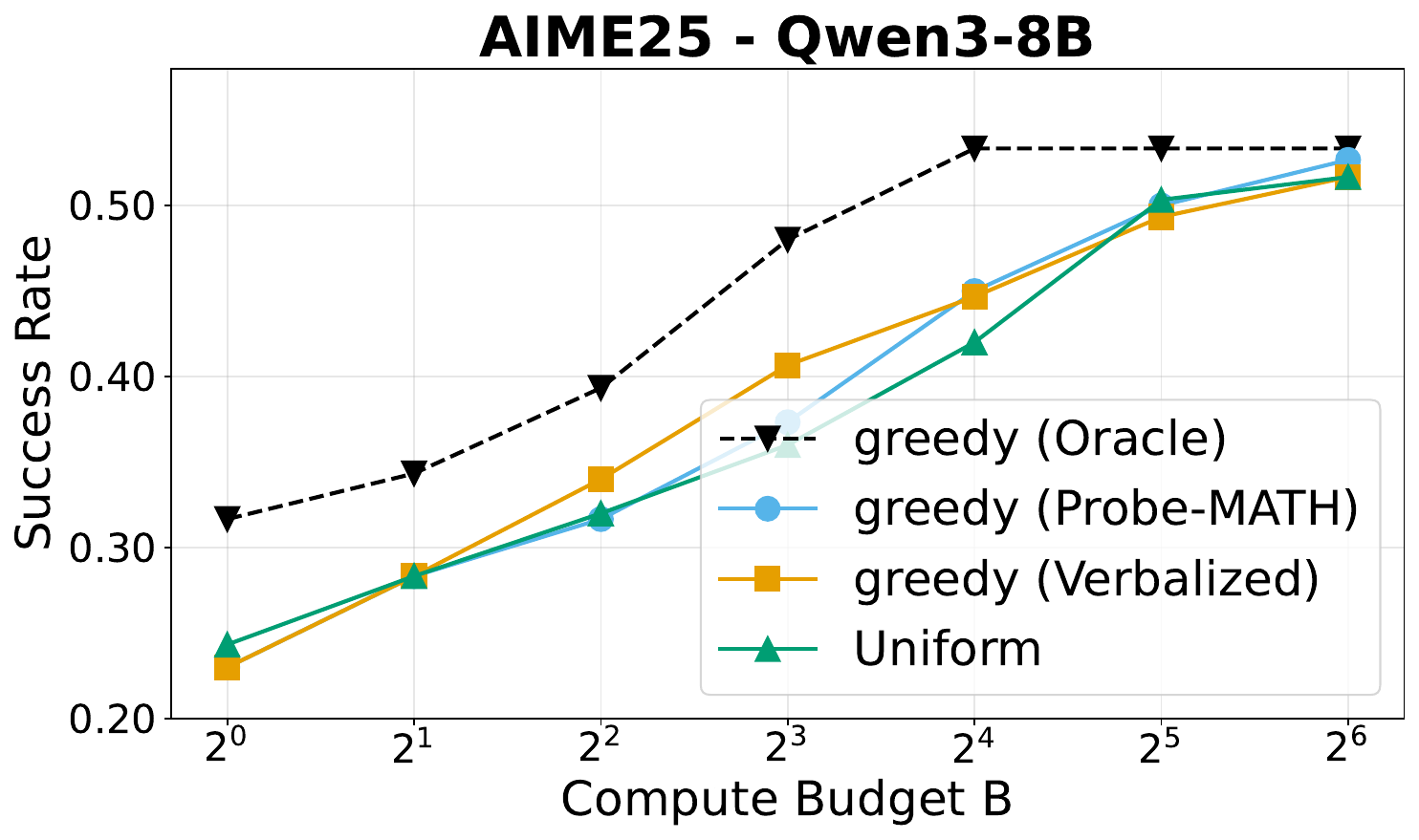}
    \end{subfigure}

    \vspace{0.3cm}

    \begin{subfigure}{\figw}
        \includegraphics[width=\linewidth]{figures/budget_alloc/math-500_gpt-oss-20b_budget_sweep.pdf}
    \end{subfigure}\hfill
    \begin{subfigure}{\figw}
        \includegraphics[width=\linewidth]{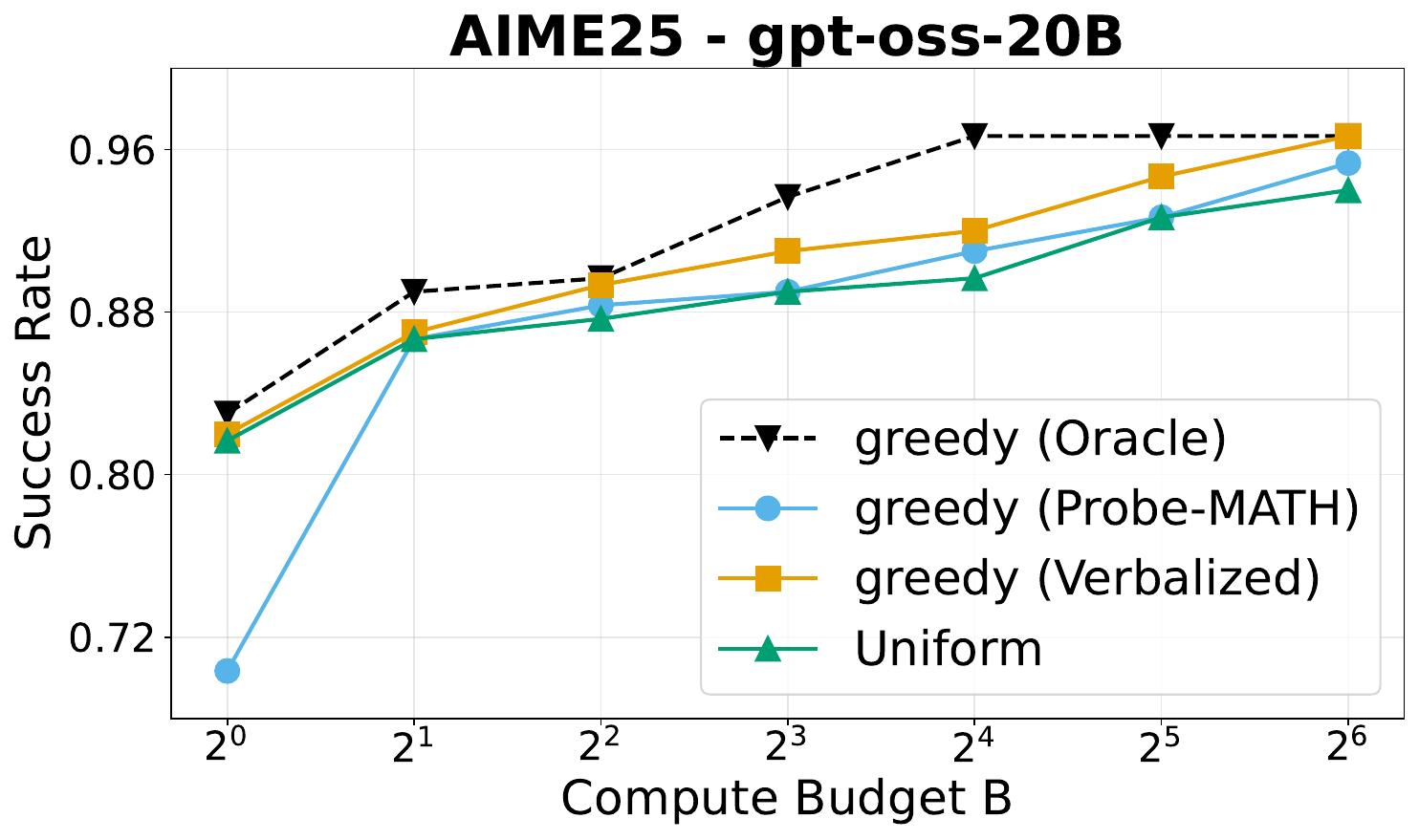}
    \end{subfigure}

    \caption{\textbf{Budget allocation results across 3 models  on MATH-500 and AIME25 datasets.} Oracle confidence consistency reaches the best performance in all scenarios. Verbalized confidence and Probe-MATH often outperform the uniform allocation. If the Brier score is good (see \cref{tab:main-results} for the Brier score), the estimated confidence will have better performance in inference budget allocation.}
    \label{fig:budget_full}
\end{figure}

\subsection{Detailed connection with other applications} \label{sec: conceptual discussion with other appli.}
In this section, we discuss how capability-calibrated confidence relates to other potential applications.\\
\textbf{Resource Routing and Estimation.}  As discussed in \citet{damani2024learning}, by estimating the ranking of a group of models' capability on answering a question, capability-calibrated confidence can be directly applied to LLM Routing \citep{maurya2025selectllm, jiang2023llm, chen2023frugalgpt, ong2024routellm}. Additionally, capability-calibrated confidence enables cost estimation. By estimating the expected accuracy $\mu$ (difficulty of the queries), we can predict the expected sampling budget $\frac{1}{\mu}$ required to generate a correct response \citep{wu2024inference}.

\textbf{Inference-Time Reliability and Active Refinement.}
By defining a confidence threshold, one can implement reliable systems that perform Selective Prediction ~\citep{mao2025calibrating, duan2024shifting, chen2023adaptation,kamath2020selective}, where systems could abstain or seek human assistance when the model is uncertain~\citep{wu2024need,chen2025query} or perform query rewriting when the confidence is low.

\textbf{Enhanced Learning and Evaluation.} Accurately estimating expected accuracy serves as a proxy for instance difficulty. It allows Curriculum Learning \citep{zhang2025beyond, zhang2025clpo} that sorts training data by difficulty, or identifies the effects of easy and hard instances. Meanwhile, capability-calibrated confidence enables the evaluation of models on unlabeled test sets, which is called Label-free Benchmarking \citep{guha2024smoothie, zhang2025reasonerrank}. We can derive model rankings that align with ground-truth evaluations.

\clearpage
\section{Targets Difference} \label{app: full metric diff}
\begin{figure}[ht!]
    \centering
    \newcommand{\figw}{0.32\textwidth}

    \begin{subfigure}{\figw}
        \includegraphics[width=\linewidth]{figures/metric_diff/triviaqa-validation_Olmo-3-7B-Instruct_comparison.pdf}
    \end{subfigure}\hfill
    \begin{subfigure}{\figw}
        \includegraphics[width=\linewidth]{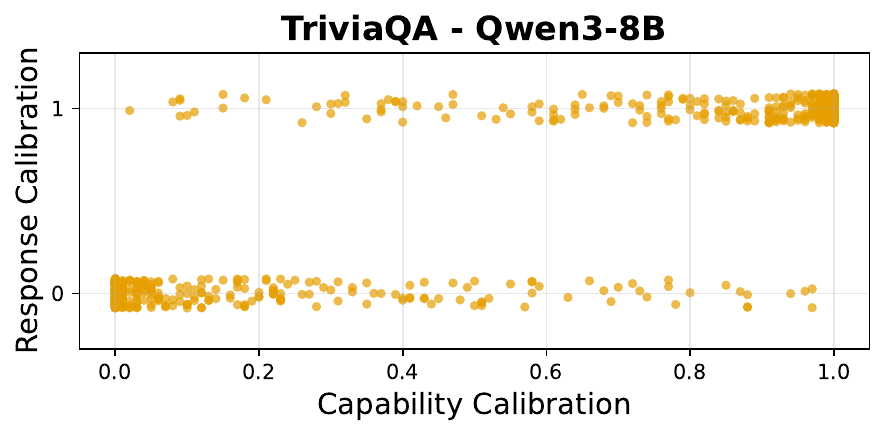}
    \end{subfigure}\hfill
    \begin{subfigure}{\figw}
        \includegraphics[width=\linewidth]{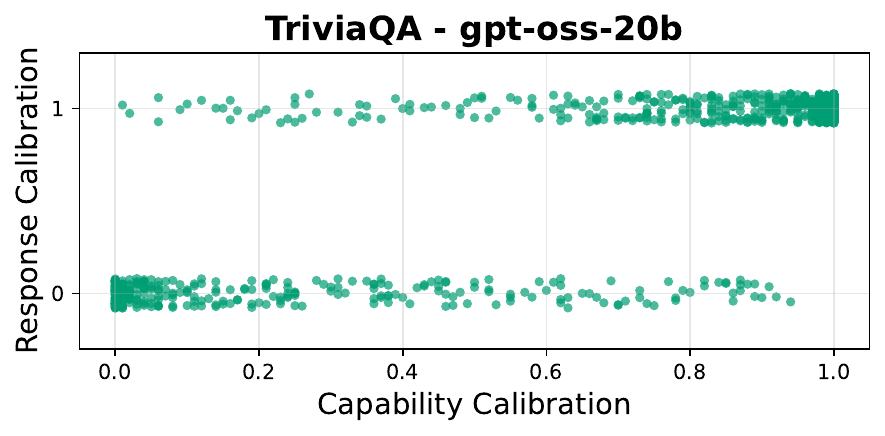}
    \end{subfigure}
    
    \vspace{0.2cm} 

    \begin{subfigure}{\figw}
        \includegraphics[width=\linewidth]{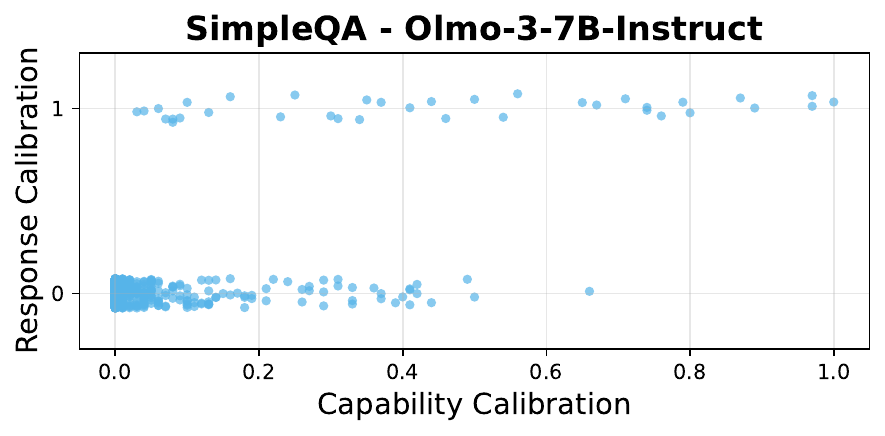}
    \end{subfigure}\hfill
    \begin{subfigure}{\figw}
        \includegraphics[width=\linewidth]{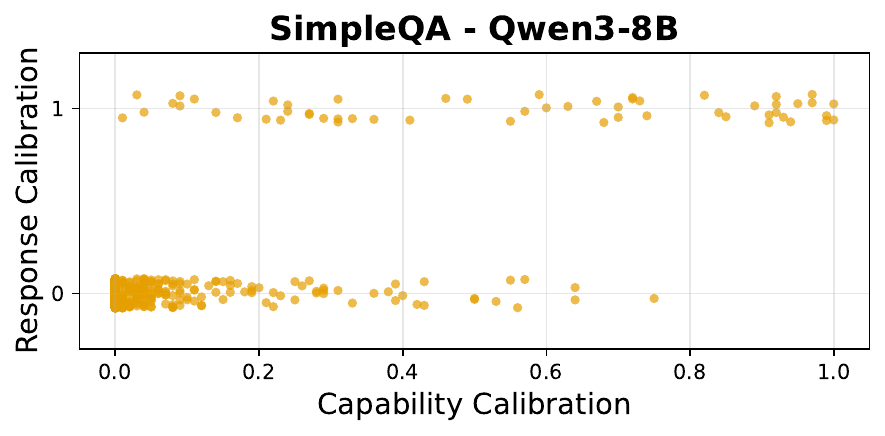}
    \end{subfigure}\hfill
    \begin{subfigure}{\figw}
        \includegraphics[width=\linewidth]{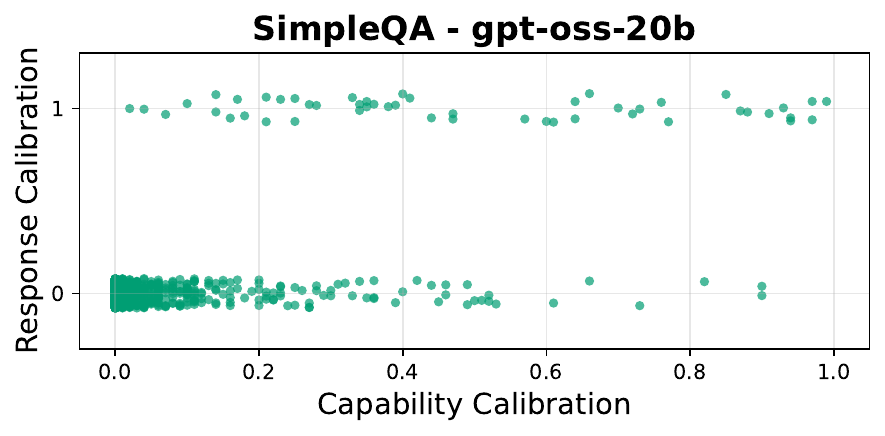}
    \end{subfigure}

    \vspace{0.2cm}

    \begin{subfigure}{\figw}
        \includegraphics[width=\linewidth]{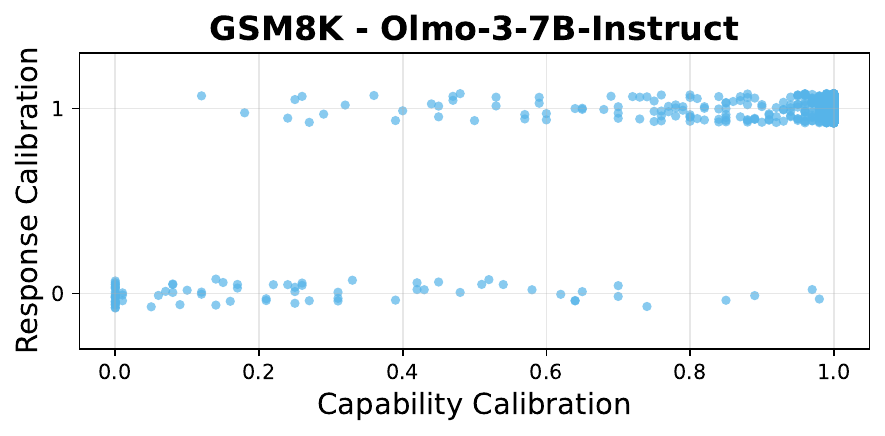}
    \end{subfigure}\hfill
    \begin{subfigure}{\figw}
        \includegraphics[width=\linewidth]{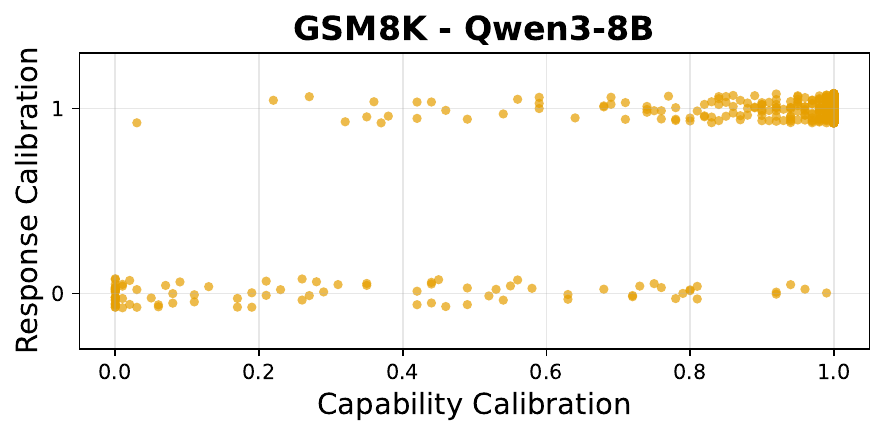}
    \end{subfigure}\hfill
    \begin{subfigure}{\figw}
        \includegraphics[width=\linewidth]{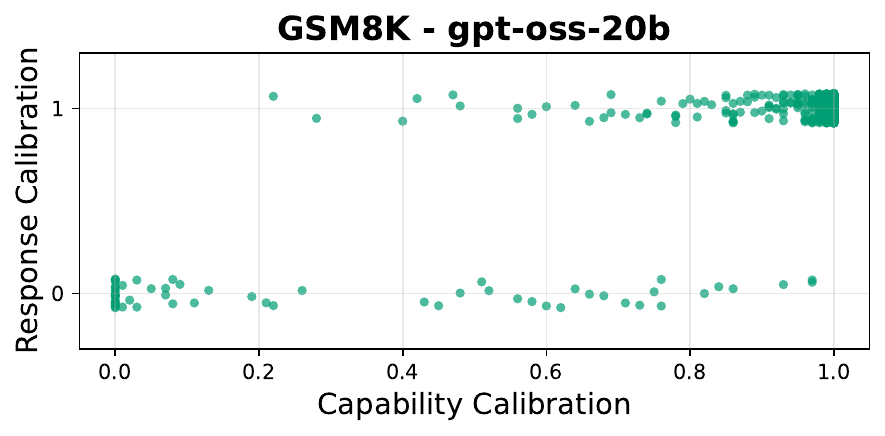}
    \end{subfigure}

    \vspace{0.2cm}

    \begin{subfigure}{\figw}
        \includegraphics[width=\linewidth]{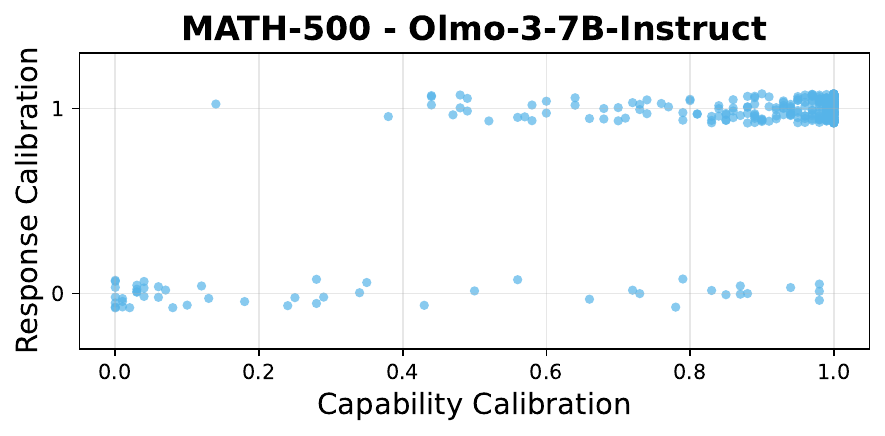}
    \end{subfigure}\hfill
    \begin{subfigure}{\figw}
        \includegraphics[width=\linewidth]{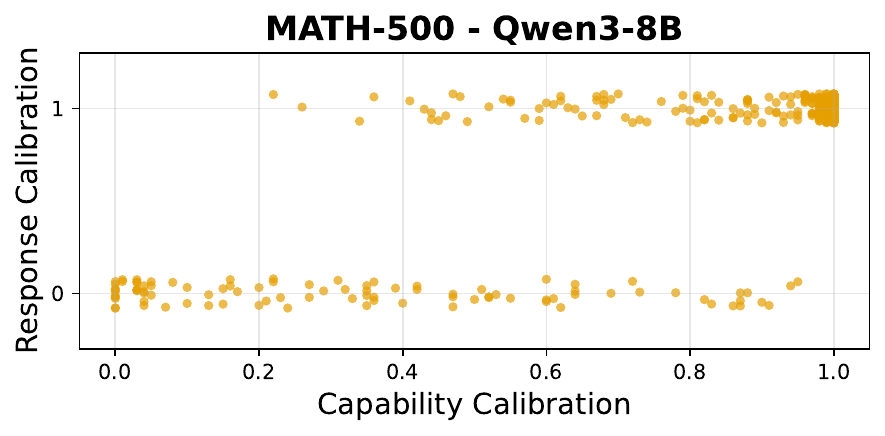}
    \end{subfigure}\hfill
    \begin{subfigure}{\figw}
        \includegraphics[width=\linewidth]{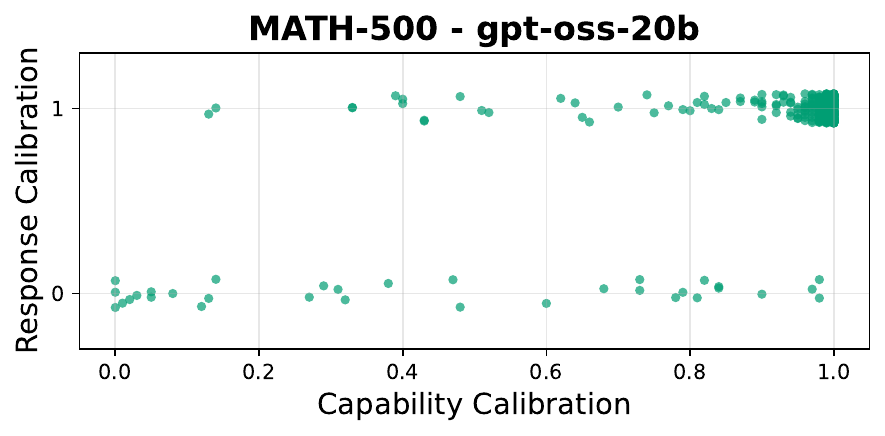}
    \end{subfigure}

    \vspace{0.2cm}

    \begin{subfigure}{\figw}
        \includegraphics[width=\linewidth]{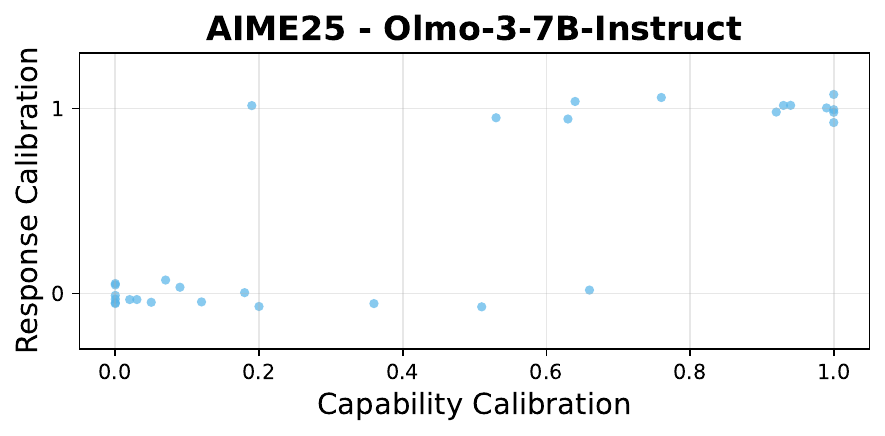}
    \end{subfigure}\hfill
    \begin{subfigure}{\figw}
        \includegraphics[width=\linewidth]{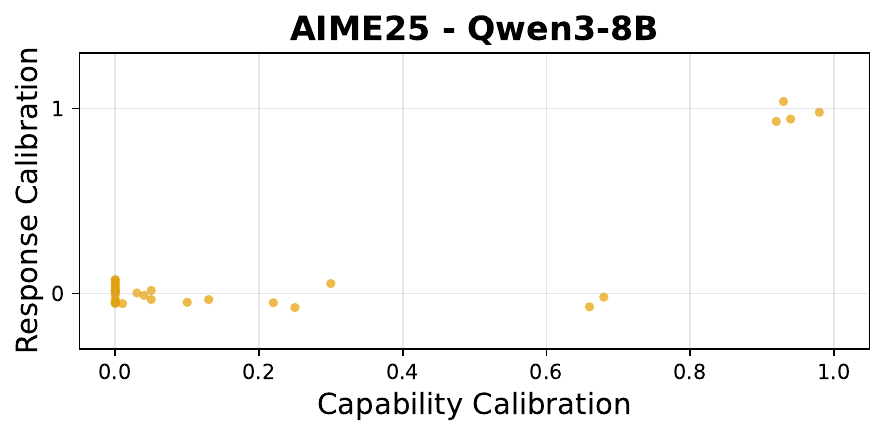}
    \end{subfigure}\hfill
    \begin{subfigure}{\figw}
        \includegraphics[width=\linewidth]{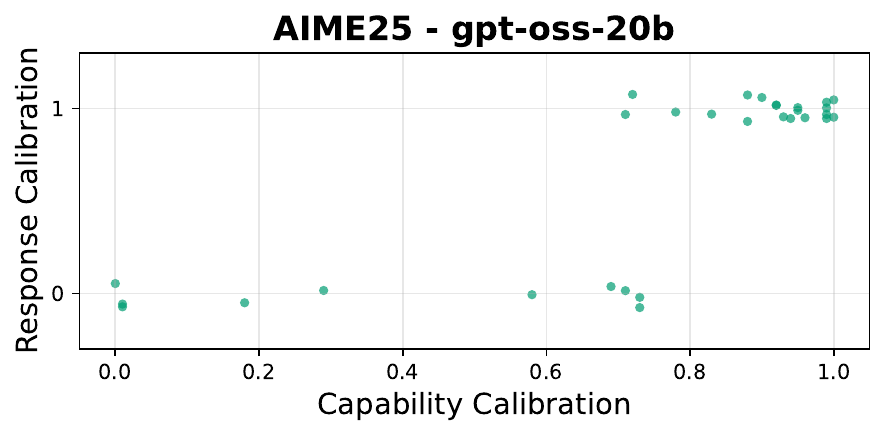}
    \end{subfigure}

        \vspace{0.2cm}

    \begin{subfigure}{\figw}
        \includegraphics[width=\linewidth]{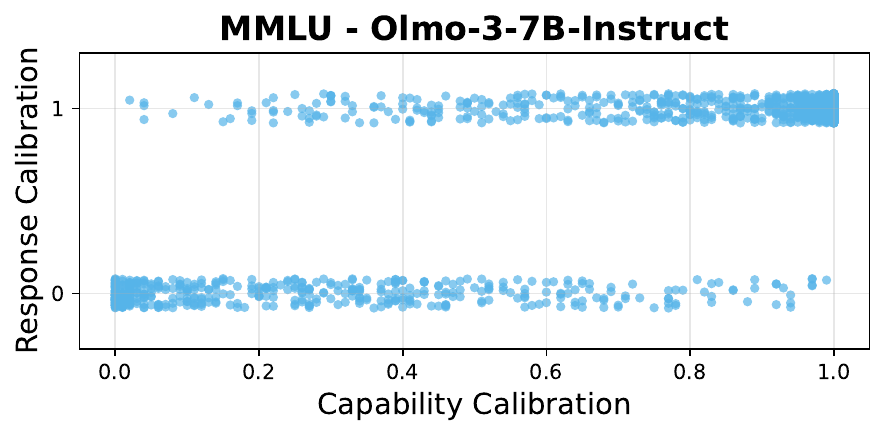}
    \end{subfigure}\hfill
    \begin{subfigure}{\figw}
        \includegraphics[width=\linewidth]{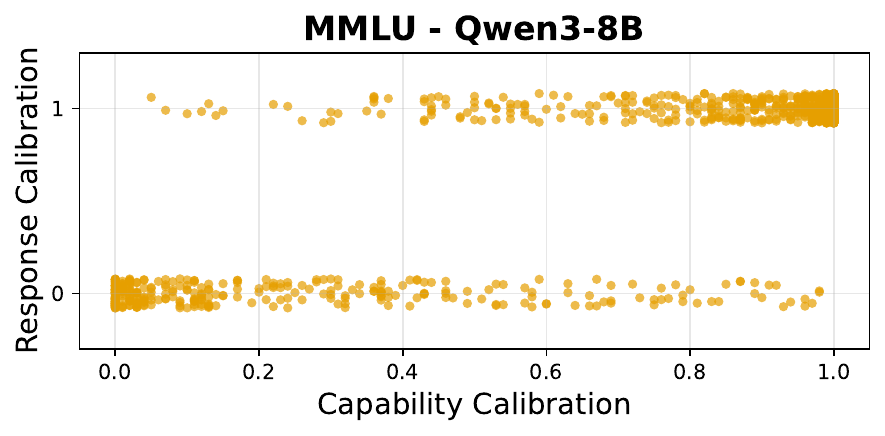}
    \end{subfigure}\hfill
    \begin{subfigure}{\figw}
        \includegraphics[width=\linewidth]{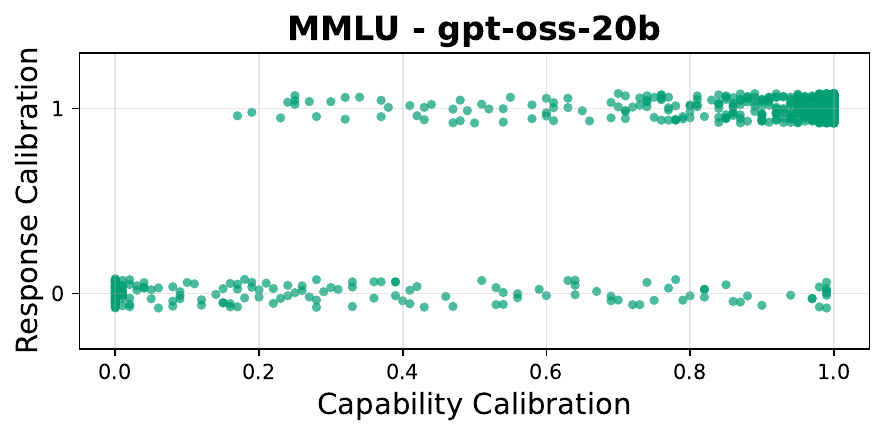}
    \end{subfigure}

    \vspace{0.2cm}

    \begin{subfigure}{\figw}
        \includegraphics[width=\linewidth]{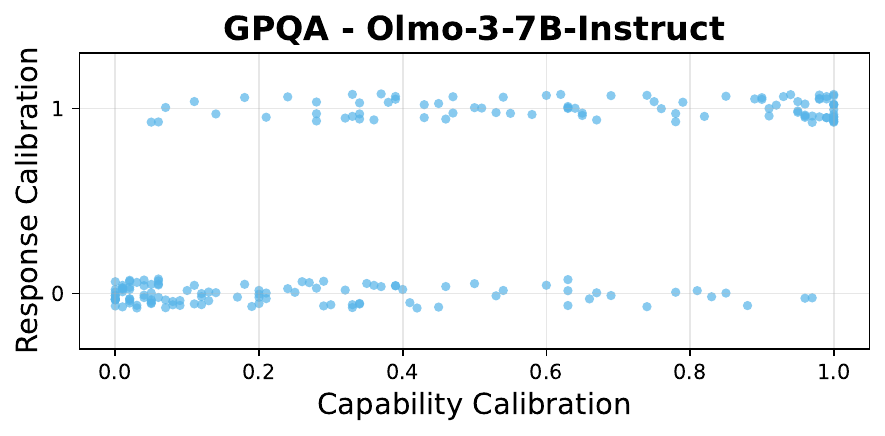}
    \end{subfigure}\hfill
    \begin{subfigure}{\figw}
        \includegraphics[width=\linewidth]{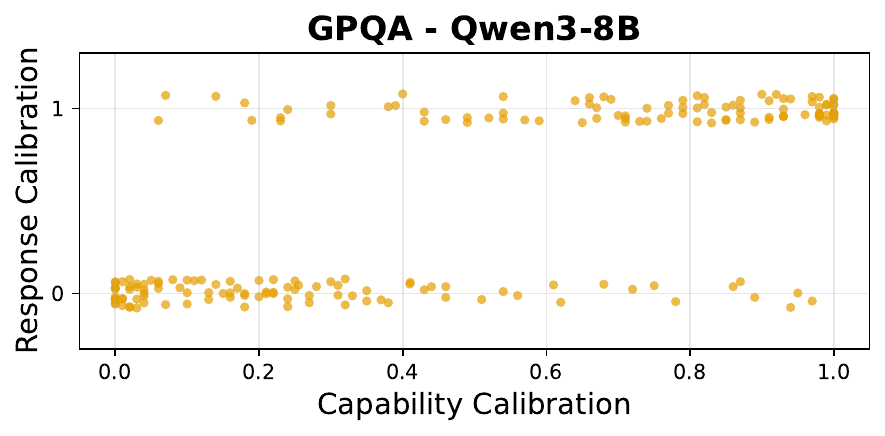}
    \end{subfigure}\hfill
    \begin{subfigure}{\figw}
        \includegraphics[width=\linewidth]{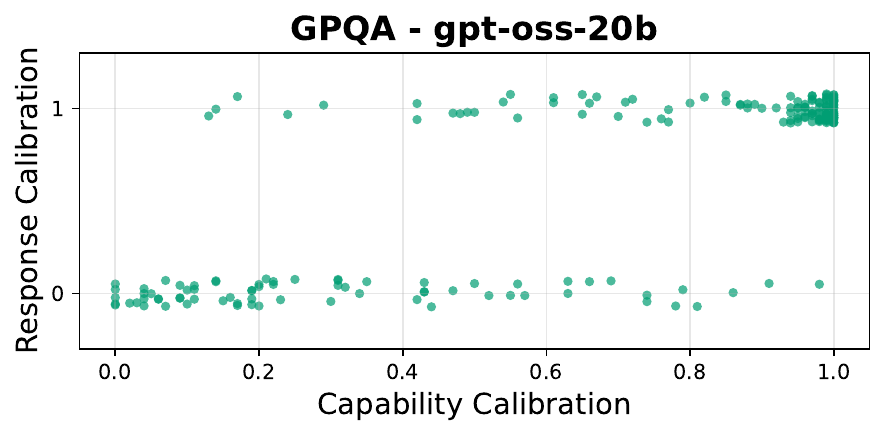}
    \end{subfigure}
    
    \caption{\textbf{Divergence of calibration targets across three models and seven datasets.} The data show that the Response Calibration (RC) target $\mathcal{C}(x, \hat{y})$ and Capability Calibration (CC) targets $\mathbb{E}_{\hat{y} \sim f_\theta(\cdot \mid x)}[\mathcal{C}(x,\hat{y})]$ differs at each model-dataset pair.}
\end{figure}


\end{document}